\newif\ifisarxiv
\title[Minimax experimental design]{Minimax experimental design:
  Bridging the gap between statistical and worst-case approaches to
  least squares regression}
\newcommand{\MSPE}[1] {{\mathrm{MSPE}\big[#1\big]}}
\newcommand{\MSE}[1] {{\mathrm{MSE}\big[#1\big]}}
\newif\ifDRAFT
\newcommand{\marrow}{\marginpar[\hfill$\longrightarrow$]{$\longleftarrow$}}
\newcommand{\niceremark}[3]
   {\textcolor{red}{\textsc{#1 #2:} \marrow\textsf{#3}}}
\newcommand{\ken}[2][says]{\niceremark{Ken}{#1}{#2}}
\newcommand{\michael}[2][says]{\niceremark{Michael}{#1}{#2}}
\newcommand{\michal}[2][says]{\niceremark{Michal}{#1}{#2}}
\newcommand{\ken}[1]{}
\newcommand{\michael}[1]{}
\newcommand{\michal}[1]{}
\newcommand{\norm}[1]{{\| #1 \|}}
\def\simiid{\overset{\textnormal{\fontsize{6}{6}\selectfont
i.i.d.}}{\sim}}
\def\wols{\w_{\mathrm{LS}}}
\def\Vol{{\mathrm{VS}}}
\newenvironment{proofof}[2]{\par\vspace{2mm}\noindent\textbf{Proof of {#1} {#2}}\ }{\hfill\BlackBox\\[2mm]}
\def\xib{\boldsymbol\xi}
\def\S{\mathbf{S}}
\def\Nc{\mathcal{N}}
\def\Wc{\mathcal{W}}
\newcommand{\BlackBox}{\rule{1.5ex}{1.5ex}}  
\DeclareMathOperator*{\argmin}{\mathop{\mathrm{argmin}}}
\def\x{\mathbf x}
\def\y{\mathbf y}
\def\z{\mathbf z}
\def\a{\mathbf a}
\def\b{\mathbf b}
\def\w{\mathbf w}
\def\v{\mathbf v}
\def\wbh{\widehat{\mathbf w}}
\def\vbh{\widehat{\mathbf v}}
\def\e{\mathbf e}
\def\zero{\mathbf 0}
\def\one{\mathbf 1}
\def\u{\mathbf u}
\def\X{\mathbf X}
\def\A{\mathbf A}
\def\U{\mathbf U}
\def\D{\mathbf D}
\def\M{\mathbf M}
\def\Fc{\mathcal{F}}
\def\Vc{\mathcal{V}}
\def\I{\mathbf I}
\def\A{\mathbf A}
\def\P{\mathbf P}
\def\E{\mathbb E}
\def\R{\mathbb R} 
\def\Pr{\mathrm{Pr}} 
\def\tr{\mathrm{tr}}
\def\rank{\mathrm{rank}}
\def\Var{\mathrm{Var}}
\newcommand{\defeq}{\stackrel{\textit{\tiny{def}}}{=}}
\newcommand{\cov}{\mathrm{cov}}
\let\origtop\top
\renewcommand\top{{\scriptscriptstyle{\origtop}}} 
\definecolor{silver}{cmyk}{0,0,0,0.3}
\definecolor{yellow}{cmyk}{0,0,0.9,0.0}
\definecolor{reddishyellow}{cmyk}{0,0.22,1.0,0.0}
\definecolor{black}{cmyk}{0,0,0.0,1.0}
\definecolor{darkYellow}{cmyk}{0.2,0.4,1.0,0}
\definecolor{orange}{cmyk}{0.0,0.7,0.9,0}
\definecolor{darkSilver}{cmyk}{0,0,0,0.1}
\definecolor{grey}{cmyk}{0,0,0,0.5}
\definecolor{darkgreen}{cmyk}{0.6,0,0.8,0}
\newenvironment{proof}{\par\noindent{\bf Proof\ }}{\hfill\BlackBox\\[2mm]}
\newtheorem{theorem}{Theorem}
\newtheorem{condition}{Condition}
\newtheorem{lemma}[theorem]{Lemma}
\newtheorem{proposition}[theorem]{Proposition}
\newtheorem{corollary}[theorem]{Corollary}
\newtheorem{definition}{Definition}
\begin{document}

\maketitle

\begin{abstract}%
In experimental design, we are given a large collection of vectors,
each with a hidden response value that we assume derives from an
underlying linear model, and we wish to pick a small subset of the
vectors such that querying the corresponding responses will lead to a
good estimator of the model.  
A classical approach in statistics is to assume the responses are
linear, plus zero-mean i.i.d.~Gaussian noise, in which case the goal
is to provide an unbiased estimator with smallest mean squared error
(A-optimal design).  
A related approach, more common in computer science, is to assume the
responses are arbitrary but fixed, in which case the goal is to
estimate the least squares solution using few responses, as quickly as
possible, for worst-case inputs.   
Despite many attempts, characterizing the relationship between these
two approaches has proven elusive.  
We address this by proposing a framework for experimental 
design where the responses are produced by
an arbitrary unknown distribution.  
We show that there is an efficient randomized experimental design
procedure that achieves strong variance bounds for an
unbiased estimator using few responses  in this general model.
Nearly tight bounds for the classical A-optimality criterion, as well
as improved bounds for worst-case responses, emerge as special cases
of this result.  
In the process, we develop a new algorithm for a joint sampling
distribution called volume sampling, and we propose a new
i.i.d.~importance sampling method: inverse score sampling.  
A key novelty of our analysis is in developing new expected error
bounds for worst-case regression by controlling the tail behavior of
i.i.d.~sampling via the jointness of volume sampling.  
Our result motivates a new minimax-optimality criterion for
experimental design which can be viewed as an extension of both
A-optimal design and sampling for worst-case regression. 
\end{abstract}

\begin{keywords}%
  A-optimality, 
  worst-case, 
  volume sampling,  
  minimax, 
  linear regression, 
  least squares.%
\end{keywords}

\section{Introduction}
Consider fixed design regression in $d$ dimensions, with $n\gg d$
experiments parameterized by vectors $\x_1,\dots,\x_n\in\R^d$ and the
associated real random response variables $y_1,\dots,y_n$. Suppose
that each response variable is modeled as a linear function of the
parameters plus i.i.d.~Gaussian noise:
$y_i=\x_i^\top\w^*+\xi_i$, where $\xi_i\sim\Nc(0,\sigma^2)$. 
Let $\X$ be the $n\times d$ matrix whose rows are $\x_i^\top$ 
(assumed to be full rank)
and let $\y$ be the vector of the $n$ random responses
$y_i$. Under the above standard statistical assumptions the \textit{least squares estimator}
$\wols(\y|\X)=\X^\dagger \y$ (where $\X^\dagger=(\X^\top\X)^{-1}\X^\top$ is
the Moore-Penrose pseudo-inverse) is known to be the minimum
variance unbiased estimator for $\w^*\in\R^d$.
This implies that it satisfies $\E_\y[\wols(\y|\X)]=\w^*$, while achieving the smallest possible \textit{mean squared error}:
$\text{MSE}[\wols(\y|\X)]=\E_\y\big[\|\wols(\y|\X)-\w^*\|^2\big]=\sigma^2\phi$, where
$\sigma^2$ is the magnitude of the noise and $\phi=\tr((\X^\top\X)^{-1})$ captures the relevant spectral structure of $\X$. 
To compute this estimator exactly, we have to observe all $n$ responses. 

In the realm of experimental design \citep{optimal-design-book}, one asks: what if we are given all
$n$ vectors $\x_i$ but are allowed to query only $k\ll n$ of the
responses? An unbiased estimator produced under this additional
restriction will certainly be no better than $\wols(\y|\X)$ (in terms of its MSE). 
There are many experimental design criteria that have been considered.
For example, we may wish to find a weight vector that minimizes the excess mean squared error resulting from the restricted access to the responses. 
This criterion is known as an A-optimal design. 
In this model, the problem reduces to finding a
subset $S\subseteq [n]$ of $k$ experiments for which the mean squared
error of the least squares estimator is minimized. Its MSE then becomes
$\min_S\sigma^2\tr((\X_S^\top\X_S)^{-1})$, where $\X_S$ is a submatrix
with $k$ rows selected by $S$.
Other optimality criteria have been studied for selecting subset $S$,
e.g., V-optimality (which we discuss below), as well as D- and
E-optimality (which are not based on the variance of the estimator,
therefore they are not as relevant to this discussion).   

How good (in terms of the MSE) can the A-optimal subset be in general?
Not surprisingly, this will depend on the total noise of the responses,
i.e.~$\E\big[\|\xib\|^2\big]=n\sigma^2$, where $\xib\in\R^n$ is the vector of
noise variables $\xi_1,\dots,\xi_n$, as well as the structure of $\X$ described
by $\phi=\tr((\X^\top\X)^{-1})$.
The following result from numerical linear algebra 
shows the existence of a subset $S$ with a
good A-optimality bound as a function of its size $k$ 
which is known to be asymptotically tight for some
matrices. The resulting experimental design given in
the corollary can be computed efficiently. 
\begin{theorem}[\citeauthor{avron-boutsidis13}, \citeyear{avron-boutsidis13}]
\label{thm:avron-boutsidis}
For any full rank $\X\in\R^{n\times d}$ and $d\leq k\leq n$, there is a
subset $S\subseteq[n]$ of size $k$ s.t.~$\tr((\X_S^\top\X_S)^{-1})\leq
\frac{n-d+1}{k-d+1}\tr((\X^\top\X)^{-1})$. 
\end{theorem}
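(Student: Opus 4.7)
The plan is to prove the theorem constructively by a greedy row-removal procedure, peeling vectors from $\X$ one at a time until only $k$ rows remain, and tracking how $\tr((\cdot)^{-1})$ grows using the Sherman--Morrison formula. Concretely, set $T_n = [n]$ and, given $T \subseteq [n]$ with $|T| > k$, let $\A_T = \X_T^\top \X_T$. When we drop row $i \in T$ we obtain $\A_{T \setminus \{i\}} = \A_T - \x_i \x_i^\top$, and the rank-one update formula gives
\[
\tr\bigl((\A_T - \x_i\x_i^\top)^{-1}\bigr) \;=\; \tr(\A_T^{-1}) + \frac{\|\A_T^{-1}\x_i\|^2}{1 - \x_i^\top\A_T^{-1}\x_i},
\]
which is the quantity the greedy step will minimize over $i \in T$.

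Next I would invoke the two trace identities $\sum_{i\in T}\x_i^\top \A_T^{-1}\x_i = \tr(\A_T^{-1}\A_T) = d$ and $\sum_{i\in T}\|\A_T^{-1}\x_i\|^2 = \tr(\A_T^{-1})$. Writing $\alpha_i \defeq \|\A_T^{-1}\x_i\|^2$ and $\beta_i \defeq 1 - \x_i^\top\A_T^{-1}\x_i$ (all positive because $|T|>d$ and the ratios will be applied only when the step is well-defined), the elementary averaging inequality $\min_i \alpha_i/\beta_i \le (\sum_i\alpha_i)/(\sum_i\beta_i)$ yields the existence of some $i^\star \in T$ with
\[
\frac{\|\A_T^{-1}\x_{i^\star}\|^2}{1-\x_{i^\star}^\top\A_T^{-1}\x_{i^\star}} \;\le\; \frac{\tr(\A_T^{-1})}{|T|-d}.
\]
Removing this row produces $T' = T \setminus \{i^\star\}$ satisfying $\tr(\A_{T'}^{-1}) \le \tr(\A_T^{-1}) \cdot \frac{|T|-d+1}{|T|-d}$.

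Finally I would iterate from $|T|=n$ down to $|T|=k$ and telescope the product:
\[
\tr\bigl((\X_S^\top\X_S)^{-1}\bigr) \;\le\; \tr\bigl((\X^\top\X)^{-1}\bigr) \prod_{t=k+1}^{n} \frac{t-d+1}{t-d} \;=\; \frac{n-d+1}{k-d+1}\,\tr\bigl((\X^\top\X)^{-1}\bigr),
\]
which is the claimed bound. The main obstacle is really just keeping the book-keeping right: one must check that $\A_T$ stays invertible throughout (true because volume/rank is preserved when we never drop below the rank constraint $|T|>d$), so that $\beta_i > 0$ and the Sherman--Morrison step is legal at every iteration. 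There is nothing deeper to worry about; the theorem falls out of a clean pigeonhole argument combined with a telescoping product, and the resulting greedy selection rule is the efficient algorithm alluded to in the corollary.
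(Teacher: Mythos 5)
The paper itself offers no proof of this theorem: it is imported verbatim from Avron and Boutsidis (2013), so there is nothing internal to compare against. Your argument is correct and is essentially the greedy backward-elimination proof from that original source: the Sherman--Morrison trace update, the two identities $\sum_{i\in T}\x_i^\top\A_T^{-1}\x_i=d$ and $\sum_{i\in T}\|\A_T^{-1}\x_i\|^2=\tr(\A_T^{-1})$, the mediant (averaging) inequality to find a good row to drop, and the telescoping product $\prod_{t=k+1}^{n}\frac{t-d+1}{t-d}=\frac{n-d+1}{k-d+1}$ all check out. The one place where your book-keeping remark deserves to be made precise is the claim that the $\beta_i=1-\x_i^\top\A_T^{-1}\x_i$ are ``all positive'': individual leverage scores can equal $1$ (a row can be critical for rank even when $|T|>d$), so some $\beta_i$ may vanish. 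This is harmless: restrict the minimum to $I=\{i:\beta_i>0\}$, which is nonempty since $\sum_{i\in T}\beta_i=|T|-d>0$; the excluded indices contribute nothing to $\sum\beta_i$ and a nonnegative amount to $\sum\alpha_i$, so $\min_{i\in I}\alpha_i/\beta_i\le\tr(\A_T^{-1})/(|T|-d)$ still holds, and choosing $i^\star\in I$ guarantees $\A_{T\setminus\{i^\star\}}$ remains invertible. With that one clarification the proof is complete, and the resulting greedy rule is indeed the efficient construction the paper alludes to after the theorem.
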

Although Theorem \ref{thm:avron-boutsidis}
was originally stated as a worst-case linear algebra statement, it
easily leads to the following corollary regarding the statistical MSE.
Here $\y_S$ denotes the vector of the selected random responses.
\begin{corollary}\label{c:classic}
Given $\X\in\R^{n\times d}$ such that 
$\tr((\X^\top\X)^{-1})\!=\!\phi$ and $\epsilon>0$, there is an
experimental design $S\subseteq [n]$
of size $k\leq d+\phi/\epsilon$ s.t.~for any $\y=\X\w^*+\xib$, where $\E[\xib] = \zero$
and $\Var[\xib]=\sigma^2\I$,
\begin{align*}
  \E_\y\big[\wols(\y_S|\X_S)\big]=\w^*\quad\text{and}\quad
  \underbrace{\mathrm{MSE}\big[\wols(\y_S|\X_S)\big]}_{\leq\frac{n-d+1}{k-d+1}\sigma^2\phi} -
    \underbrace{\mathrm{MSE}\big[\wols(\y|\X)\big]}_{\sigma^2\phi} \leq \epsilon\cdot
  \underbrace{\E_\y\big[\|\xib\|^2\big]}_{n\sigma^2}.
\end{align*}
\end{corollary}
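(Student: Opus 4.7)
The plan is to reduce the corollary directly to Theorem~\ref{thm:avron-boutsidis} via the standard MSE calculation for the least squares estimator under homoscedastic noise. First I would invoke Theorem~\ref{thm:avron-boutsidis} to obtain a subset $S\subseteq[n]$ of size $k$ (to be chosen shortly) with $\tr((\X_S^\top\X_S)^{-1})\le\frac{n-d+1}{k-d+1}\phi$. Since this bound is finite, $\X_S$ is full rank, so the pseudo-inverse $\X_S^\dagger=(\X_S^\top\X_S)^{-1}\X_S^\top$ is well-defined.

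Next I would verify unbiasedness: writing $\y_S=\X_S\w^*+\xib_S$ with $\E[\xib_S]=\zero$, linearity gives $\E_\y[\wols(\y_S|\X_S)]=\X_S^\dagger\X_S\w^*=\w^*$. For the MSE, the error vector is $\wols(\y_S|\X_S)-\w^*=\X_S^\dagger\xib_S$, so by the trace trick and $\Var[\xib_S]=\sigma^2\I$,
\[
\mathrm{MSE}\big[\wols(\y_S|\X_S)\big]=\E_\y\big[\|\X_S^\dagger\xib_S\|^2\big]=\sigma^2\tr\big((\X_S^\top\X_S)^{-1}\big)\le\frac{n-d+1}{k-d+1}\sigma^2\phi,
\]
which establishes the bracketed upper bound under the first MSE in the statement, while the bracketed value $\sigma^2\phi$ under the second MSE is the identical standard calculation applied to $\X$ itself.

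Finally, to produce the $\epsilon$ gap bound, I would take $k=\lceil d+\phi/\epsilon\rceil$ so that $k-d+1\ge\phi/\epsilon$. Then
\[
\mathrm{MSE}\big[\wols(\y_S|\X_S)\big]-\mathrm{MSE}\big[\wols(\y|\X)\big]\le\sigma^2\phi\cdot\frac{n-k}{k-d+1}\le\sigma^2\phi\cdot\frac{n}{\phi/\epsilon}=\epsilon\cdot n\sigma^2,
\]
which equals $\epsilon\cdot\E_\y[\|\xib\|^2]$, as required.

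There is really no hard step here: the work is already carried by Theorem~\ref{thm:avron-boutsidis}, and the only thing to be careful about is matching the free parameter $k$ to the claimed size $d+\phi/\epsilon$ so that the algebraic gap collapses cleanly to $\epsilon n\sigma^2$. The main conceptual point of the corollary is simply to recast the worst-case linear-algebraic guarantee as a statistical excess-MSE statement normalized against the total noise budget $n\sigma^2$.
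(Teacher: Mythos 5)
Your proof is correct and is exactly the argument the paper intends (the paper gives no explicit proof, noting only that the corollary follows easily from Theorem~\ref{thm:avron-boutsidis}): apply that theorem, compute $\mathrm{MSE}=\sigma^2\tr((\X_S^\top\X_S)^{-1})$ under homoscedastic noise, and bound the gap by $\sigma^2\phi\,\frac{n-k}{k-d+1}\leq\epsilon n\sigma^2$. One cosmetic fix: take $k=\lfloor d+\phi/\epsilon\rfloor$ rather than the ceiling, so that the stated size bound $k\leq d+\phi/\epsilon$ holds while still giving $k-d+1>\phi/\epsilon$.
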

Note that the bound in Corollary \ref{c:classic} holds even without
subtracting $\MSE{\wols(\y|\X)}$, however we include it here for the sake of
consistency with the later discussion.

\subsection{Experimental design with arbitrary random responses}
While noise $\xib$ need not be i.i.d.~Gaussian to show
Corollary~\ref{c:classic}, it still has to be
zero-mean, homoscedastic (same variances) and uncorrelated.
In this section we show that there are experimental designs for
which the MSE bound from Corollary \ref{c:classic} holds
for any (even adversarial) noise. This will allow us to propose a new
``minimax-optimality'' criterion for experimental design (in Section
\ref{s:minimax}) which can be viewed as a generalization of
A-optimality to arbitrary random responses. From now on, the only
assumption we make on the random variables $y_i$ is that they have a
finite second moment. We next redefine the optimal linear predictor $\w^*$
and the vector of noise variables $\xib$ as:\footnote{Using the fact
  that $\E_\y \big[\norm{\X\w-\y}^2\big]=\E_\y
  \big[\norm{\X\w-\E[\y]}^2\big] + \E\big[\norm{\y - \E[\y]}^2\big]$.} 
\begin{align*}
  \w^*\defeq  \argmin_\w \E_\y \big[\|\X\w-\y\|^2\big]
  	=\X^\dagger\E[\y],\qquad\xib_{\y|\X}\defeq \X\w^*-\y.
\end{align*}
Note that when the noise
  happens to have mean zero, i.e.~$\E[\xib_{\y|\X}]=\zero$, then this definition
  of $\w^*$ is consistent with the statistical setting. Having no
  knowledge of the response model means that we cannot commit to a
  particular fixed subset $S$ because those responses could be
  adversarially noisy. To avoid this, we allow randomization in the design
  procedure.
  \begin{definition}
      \label{d:rand}
A ``random experimental design'' $(S,\wbh)$ of
size $k$ consists of a \textbf{random} set $S\subseteq [n]$ of size at most $k$
and a \textbf{random} function $\wbh:\R^{|S|}\rightarrow \R^d$, which
returns an estimator $\wbh(\y_S)$.
\end{definition}
The mean squared error in this context is defined as:
  $\text{MSE}\big[\wbh(\y_S)\big] =
  \E_{S,\wbh,\y}\big[\|\wbh(\y_S)-\w^*\|^2\big]$,
so it is exactly the standard MSE, except with the expectation taken
over the randomness of both the responses and the design. 
Our main result shows that when we allow the experimental design procedure to be randomized,
the mean squared error bound
given in Corollary \ref{c:classic} for homoscedastic noise can be
recovered almost exactly for arbitrary random response vectors $\y$ 
(which includes deterministically chosen response vectors as a special case).
\begin{theorem}\label{t:mse}
Given $\X\in\R^{n\times d}$ such that 
$\tr((\X^\top\X)^{-1})\!=\!\phi$ and $\epsilon>0$, there is a
random experimental design $(S,\wbh)$
of size $k=O(d\log n+\phi/\epsilon)$ s.t.~for \textbf{any} random
response vector $\y$, 
\begin{align*}
\E_{S,\wbh,\y}\big[\wbh(\y_S)\big]=\w^*\quad\text{and}\quad
  \mathrm{MSE}\big[\wbh(\y_S)\big] - \mathrm{MSE}\big[\wols(\y|\X)\big]
  \leq \epsilon\cdot 
  \E_\y\big[\|\xib_{\y|\X}\|^2\big].
\end{align*}
\end{theorem}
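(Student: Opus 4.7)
The plan is to produce a randomized design whose estimator is conditionally unbiased for the full-data least-squares solution, and then bound the residual variance via a careful combination of joint and i.i.d.\ sampling.

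I would set $\wbh(\y_S) := \wols(\y_S|\X_S)$ and draw $S$ from a joint distribution with the defining property of volume sampling: for every deterministic $\y\in\R^n$,
\[
  \E_S\!\left[\wols(\y_S|\X_S)\right] = \wols(\y|\X).
\]
Since $\w^*=\X^\dagger\E[\y]$ by definition, the tower rule then yields $\E_{S,\y}[\wbh(\y_S)]=\E_\y[\wols(\y|\X)]=\w^*$, handling unbiasedness. Moreover, expanding $\|\wbh(\y_S)-\w^*\|^2$ around $\wols(\y|\X)$ and using the same identity to kill the cross term after taking $\E_S$ gives the clean decomposition
\[
  \MSE{\wbh(\y_S)} - \MSE{\wols(\y|\X)} = \E_\y\E_S\!\left[\|\wbh(\y_S)-\wols(\y|\X)\|^2\right].
\]

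Let $\mathbf{r}(\y):=\y-\X\wols(\y|\X)$; since $\wols(\y|\X)$ minimizes $\|\X\w-\y\|^2$, one has $\|\mathbf{r}\|^2\leq\|\y-\X\w^*\|^2=\|\xib_{\y|\X}\|^2$. A direct computation gives $\wbh(\y_S)-\wols(\y|\X)=(\X_S^\top\X_S)^{-1}\X_S^\top\mathbf{r}_S$, so the theorem reduces to establishing the worst-case variance bound
\[
  \E_S\!\left[\big\|(\X_S^\top\X_S)^{-1}\X_S^\top\mathbf{r}_S\big\|^2\right]\leq\epsilon\,\|\mathbf{r}\|^2\qquad\text{for every fixed }\y.
\]
For the sampling distribution I would use a two-layer construction of total size $k=O(d\log n+\phi/\epsilon)$: first, an i.i.d.\ set $T$ of size $O(d\log n)$ drawn under the paper's inverse-score (leverage-type) distribution, so that with overwhelming probability $\X_T^\top\X_T$ is a spectral approximation of $\X^\top\X$; then a reweighted volume-sampling layer of size $O(\phi/\epsilon)$ on the rescaled instance induced by $T$. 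The outer volume-sampling step preserves the subset-unbiasedness identity required above, while the inner i.i.d.\ layer controls the heavy tail of $(\X_S^\top\X_S)^{-1}$ and lets the A-optimality bound of Theorem~\ref{thm:avron-boutsidis} deliver the required $\epsilon$-scale on the expected quadratic form.

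The main obstacle is showing that this two-layer sampler inherits both properties at once. Pure i.i.d.\ sampling fails the subset-unbiasedness identity, while pure size-$d$ volume sampling has $\E_S[\tr((\X_S^\top\X_S)^{-1})]$ of order $(n-d+1)\phi$, which makes the quadratic form blow up with $n$ unless $k$ grows accordingly. The technical heart of the proof will be a Loewner-order bound of the form
\[
  \E_S\!\left[(\X_S^\top\X_S)^{-1}\X_S^\top\mathbf{r}_S\mathbf{r}_S^\top\X_S(\X_S^\top\X_S)^{-1}\right]\preceq\epsilon\,\|\mathbf{r}\|^2\,(\X^\top\X)^{-1},
\]
obtained by splitting the expectation into the ``typical'' event on which $T$ spectrally approximates $\X^\top\X$ (where an Avron--Boutsidis-style argument can be run on the rescaled subproblem) and a ``bad'' tail event (handled by the determinantal cancellations intrinsic to volume sampling). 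This is precisely the ``controlling tail behavior of i.i.d.\ sampling via the jointness of volume sampling'' theme emphasized in the abstract, and I expect it to require an explicit coupling or rejection-style argument between the two layers to push through.
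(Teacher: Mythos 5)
Your high-level plan (reduce to a fixed $\y$, split the error at $\wols(\y|\X)$, and combine a joint volume-sampling layer with an i.i.d.\ layer analyzed separately on a ``good'' spectral-approximation event and its complement) is indeed the skeleton of the paper's argument. But the concrete construction has gaps that would break the proof as proposed. First, the estimator: you take $\wbh(\y_S)=\wols(\y_S|\X_S)$, the \emph{unweighted} least squares on the subsample, and simultaneously propose a ``reweighted volume-sampling layer on the rescaled instance induced by $T$.'' The identity $\E_S[\wols(\y_S|\X_S)]=\wols(\y|\X)$ holds for \emph{unrescaled} volume sampling; once you reweight (which you must, to get below the $O(d^2/\epsilon)$ sample size known for plain volume sampling), unbiasedness requires the weighted estimator $(\S_\pi\X)^\dagger\S_\pi\y$ of Definition~\ref{d:vs}, via $\E[(\S_\pi\X)^\dagger\S_\pi]=\X^\dagger$ (Lemma~\ref{l:expectations}). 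Moreover, volume sampling performed on an instance induced by a random $T$ is unbiased for the least-squares solution of \emph{that} instance, not of $(\X,\y)$. The paper avoids both problems by drawing a single $q$-rescaled volume sample over all of $[n]$ and only \emph{decomposing} it for analysis (Lemma~\ref{l:augmenting}) into a size-$d$ volume part plus $k-d$ i.i.d.\ draws --- volume-first, i.i.d.-second, the reverse of your layering.

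Second, the quantitative heart is missing. On the good event the quantity to control is $\E\big[\|\X^\dagger\S_\pi^\top\S_\pi\mathbf{r}\|^2\big]$, and an Avron--Boutsidis/A-optimality bound on $\E[\tr((\X_S^\top\X_S)^{-1})]$ does not touch it: you need (i) a variance bound for $\X^\dagger\S_\pi^\top\S_\pi\mathbf{r}$ under rescaled volume sampling (the paper's Lemma~\ref{l:mult}), which forces the i.i.d.\ distribution to dominate the \emph{inverse scores} $v_i(\X)=\x_i^\top(\X^\top\X)^{-2}\x_i$ --- the squared row norms of $\X^{\dagger\top}$, which produce the $\phi/\epsilon$ rate and are a different object from the leverage scores you conflate them with (leverage scores serve only the subspace-embedding role and alone give the weaker $d\,\lambda_{\max}((\X^\top\X)^{-1})/\epsilon$); and (ii) a separate argument for the mean term $\|\X^\dagger\E[\S_\pi^\top\S_\pi]\mathbf{r}\|^2$, which is \emph{not} zero because the marginals of rescaled volume sampling are shifted by leverage scores ($\E[s_i]=(k-d)q_i+l_i$); the paper needs Lemma~\ref{l:bias}, with a shifted mixture parameter, to kill it. Finally, the crude bound on the bad event relies on $\S_\pi^\top\S_\pi\preceq 4n\,\I$, which is available only because a uniform component is mixed into $q$; your distribution has no such component, so the contribution of the failure event remains uncontrolled and the expected (rather than constant-probability) bound does not follow.
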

To put this result in context we consider several different response
models to which it applies:
\begin{enumerate}
\item \textit{A-optimal experimental design}. If we assume
independent homoscedastic zero-mean 
noise, then
  our model matches the classical A-optimal
  experimental design, except for allowing the design procedure to be
  randomized. Despite the broadness of Theorem~\ref{t:mse} it still
  offers sample complexity that is only a log factor away from that of
  Corollary~\ref{c:classic}. 
  \item \textit{Heteroscedastic regression}. We let each
    response have zero-mean noise with some unknown variance
    $\Var[\xi_i]=\sigma_i^2$. In this case, unlike existing work such as
    \cite{regularized-volume-sampling}, we bound the MSE in terms of
    $\sum_i\sigma_i^2$ rather than $n\cdot\max_i\sigma_i^2$. Our
    design achieves this without having to adaptively estimate the
    variances as done by \cite{v-optimal}.
  \item \textit{Bayesian regression}. Suppose that
    $\y=\X\w+\z$, where $\w\sim D_\w$ is a random vector 
with a prior $D_\w$ and mean $\w^*$, whereas $\z$ is a zero-mean
random noise. In this case we may wish to minimize MSE w.r.t.~$\w$ (and not
$\w^*$),
i.e., $\E_{S,\wbh,\w,\z}\big[\|\wbh(\y_S)-\w\|^2\big]$.
For this purpose we can still apply Theorem \ref{t:mse} to the response
vector $\y$ conditioned on $\w$, obtaining:
    \begin{align*}
      \underbrace{\E\big[\|\wbh(\y_S)-\w\|^2\big]
      \ -\ \E\big[\|\wols(\y|\X)-\w\|^2\big]}_{
      \E\big[\E\big[\|\wbh(\y_S)-\w\|^2-\|\wols(\y|\X)-\w\|^2\ |\ \w \big]\big]} 
      \ \leq\hspace{-2mm}\underbrace{\epsilon\cdot\tr\big(\Var[\z]\big)}_{\E\big[\epsilon\,\cdot\,\E[\|\X\w-\y\|^2\,|\w]\big]}\hspace{-4mm}.
    \end{align*}
    While traditional Bayesian experimental design
    \citep[see][]{bayesian-design-review} focuses on 
    i.i.d.~Gaussian noise, our results apply to arbitrary
    zero-mean noise. A natural future direction is to extend Theorem \ref{t:mse} to
    biased estimators that take advantage of the prior information.
  \item \textit{Worst-case regression}. We let $\y$ be some arbitrary
    fixed vector $\y\in\R^n$, i.e., $\Var[\xib_{\y|\X}]=\zero$ \citep[a well-studied
    problem; see, e.g.,][]{drineas2006sampling}. Then $\wols(\y|\X)=\w^*$
    and we get: 
    \begin{align*}
      \E_{S,\wbh}\big[\|\wbh(\y_S)-\w^*\|^2\big] \leq \epsilon\cdot \|\X\w^*-\y\|^2,
    \end{align*}
the first such bound that holds: (a) ``in
    expectation'' (rather than ``with constant probability''), (b) for an unbiased
    estimator, (c) for sample size $O(\phi/\epsilon)$ (when $\epsilon$
    is sufficiently small).
\end{enumerate}
As a corollary to Theorem \ref{t:mse}, we give an additional result
which bounds the \textit{mean squared prediction error} (MSPE) instead
of MSE, defined as   $\text{MSPE}\big[\wbh(\y_S)\big] =
\E_{S,\wbh,\y}\big[\|\X(\wbh(\y_S)-\w^*)\|^2\big]$.
In many tasks, the 
performance of an estimator is evaluated 
in terms of the prediction accuracy, in which case MSPE may be a
natural metric. Note that here the sample complexity no longer depends
on the spectral parameter $\phi$ (which is replaced by $d$), just as
it happens when bounding MSPE in the classical homoscedastic setting.
\begin{theorem}\label{t:mspe}
Given a full rank $\X\in\R^{n\times d}$ and $\epsilon>0$, there is a
random experimental design $(S,\wbh)$
of size $k=O(d\log n+d/\epsilon)$ such that for \textbf{any} random
response vector $\y$, 
  \begin{align*}
    \E_{S,\wbh,\y}\big[\wbh(\y_S)\big] = \w^*\quad\text{and}\quad
    \mathrm{MSPE}\big[\wbh(\y_S)\big]-\mathrm{MSPE}\big[\wols(\y|\X)\big]\leq
    \epsilon\cdot\E_\y\big[\|\xib_{\y|\X}\|^2\big].
  \end{align*}
\end{theorem}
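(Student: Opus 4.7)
The plan is to reduce Theorem \ref{t:mspe} to Theorem \ref{t:mse} by preconditioning the design matrix so that its trace parameter $\phi$ collapses to $d$. Concretely, I would take the thin SVD $\X=\U\Sigmab\V^\top$ with $\U\in\R^{n\times d}$ having orthonormal columns and $\Sigmab,\V\in\R^{d\times d}$, and define the whitened design $\Xt:=\U$, for which $\tr((\Xt^\top\Xt)^{-1})=\tr(\I_d)=d$. This amounts to the linear reparameterization $\wt:=\Sigmab\V^\top\w$; under it, $\X\w=\Xt\wt$ holds for every $\w\in\R^d$, the optimal predictor transforms as $\wt^*=\Sigmab\V^\top\w^*$, and the noise vector is invariant:
\begin{align*}
\xib_{\y|\Xt}=\Xt\wt^*-\y=\X\w^*-\y=\xib_{\y|\X}.
\end{align*}

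The crucial observation is that this reparameterization converts prediction error on $\X$ into estimation error on $\Xt$: for any $\wbh\in\R^d$, setting $\widetilde{\wbh}:=\Sigmab\V^\top\wbh$ gives
\begin{align*}
\|\X(\wbh-\w^*)\|^2=\|\U\Sigmab\V^\top(\wbh-\w^*)\|^2=\|\widetilde{\wbh}-\wt^*\|^2,
\end{align*}
and applying the same identity to the full-sample least squares estimator via $\X\wols(\y|\X)=\U\U^\top\y=\Xt\wols(\y|\Xt)$ yields $\MSPE{\wols(\y|\X)}=\MSE{\wols(\y|\Xt)}$.

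I would then invoke Theorem \ref{t:mse} on $\Xt$, whose trace parameter is $d$, obtaining a random design $(S,\widetilde{\wbh})$ of size $k=O(d\log n + d/\epsilon)$ that is unbiased for $\wt^*$ and satisfies $\MSE{\widetilde{\wbh}(\y_S)}-\MSE{\wols(\y|\Xt)}\leq \epsilon\,\E[\|\xib_{\y|\Xt}\|^2]$. Defining the original-space estimator $\wbh(\y_S):=\V\Sigmab^{-1}\widetilde{\wbh}(\y_S)$, which depends only on $\X$ (which is known to the designer) and $\y_S$, preserves unbiasedness since $\E[\wbh(\y_S)]=\V\Sigmab^{-1}\wt^*=\V\V^\top\w^*=\w^*$, and by the MSE-to-MSPE identities above it transports the error bound to exactly the form required by Theorem \ref{t:mspe}.

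The proof therefore involves no essentially new ideas beyond Theorem \ref{t:mse}; the main obstacle is simply bookkeeping. One must verify that the MSPE-MSE correspondence holds \emph{pointwise in $\y$}, so that the joint expectation over $S$, $\widetilde{\wbh}$, and $\y$ commutes cleanly between the original and whitened formulations, and that the induced sampling distribution in the original space is still realizable by an efficient procedure (which is automatic, since the SVD is a preprocessing step). Once the invariance $\xib_{\y|\Xt}=\xib_{\y|\X}$ and the two MSPE=MSE identities are in hand, the sample complexity improvement from $\phi/\epsilon$ to $d/\epsilon$ is immediate.
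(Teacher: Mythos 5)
Your proposal is correct and follows essentially the same route as the paper's own proof, which reduces Theorem \ref{t:mspe} to Theorem \ref{t:mse} via the whitening $\U=\X(\X^\top\X)^{-\sfrac12}$ (your $\Xt=\U$ from the thin SVD differs from this only by the orthogonal rotation $\V^\top$), using the same identities $\X\w=\Xt\wt$, $\xib_{\y|\Xt}=\xib_{\y|\X}$, and $\mathrm{MSPE}$ on $\X$ equal to $\mathrm{MSE}$ on the whitened matrix with $\tr((\Xt^\top\Xt)^{-1})=d$. The only detail the paper adds that you may wish to note is that the induced sampling distribution genuinely changes under the transformation, since the inverse scores of $\U$ coincide with its leverage scores.
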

In the statistical setting, minimizing the MSPE is often referred to
as V-optimal design \citep[see][]{v-optimal}.
On the other hand, in worst-case regression analysis (when responses
form a fixed vector $\y\in\R^n$), the mean squared prediction error is
often replaced by the ``square loss'': $L(\w) =
\|\X\w-\y\|^2$. Theorem~\ref{t:mspe} implies a bound on the expected
square loss of the estimator $\wbh(\y_S)$:
\begin{align}
    \E_{S,\wbh}\big[L(\wbh(\y_S))\big] \overset{(*)}{=} \MSPE{\wbh(\y_S)}+L(\w^*) \leq
(1+\epsilon)\cdot L(\w^*).
    \label{e:regret}
\end{align}
where $(*)$ follows from the unbiasedness of $\wbh(\y_S)$ via the
bias-variance decomposition of the expected square loss.
The only
\textit{expected} loss bound of this kind known prior to this result
required sample size $k=O(d^2/\epsilon)$ \citep{unbiased-estimates}. 

Since our experimental design is randomized, each evaluation may
produce a different result. In fact this can go to our advantage:
instead of using one design with a larger $k$ we can choose to produce
multiple independent designs with a small $k$, say
$(S_1,\wbh_1),\dots,(S_m,\wbh_m)$, and then average them. This
strategy may be preferrable in distributed settings and when data
privacy is a concern. Since all the
designs are unbiased for the random responses $\y$, it follows that:
\begin{align*}
  \mathrm{MSE}\bigg[\frac1m\sum_{t=1}^m\wbh_t(\y_{S_t})\bigg] -
  \mathrm{MSE}\big[\wols(\y|\X)\big] = 
\frac1m\Big(\mathrm{MSE}\big[\wbh_1(\y_{S_1})\big] - \mathrm{MSE}\big[\wols(\y|\X)\big]\Big),
\end{align*}
with an analogous formula also holding for the MSPE.

\subsection{Minimax-optimal experimental design}
\label{s:minimax}
If we divide both sides of the inequality in Theorem~\ref{t:mse} by
the right-hand-side $\E_\y[\|\xib_{\y|\X}\|^2]$, we see a ratio bounded above by $\epsilon$
for all $\y$. This ratio, or rather its maximum over all $\y$, can be considered a
quality criterion for experimental designs,  to be minimized instead of only
bounded. We will call the optimum a
\emph{minimax-optimal design}. The key difference compared to the standard setup
is that we allow the design to be randomized. Let $\Fc$ denote the family
of \textit{all} random vectors in $\R^n$ with finite second moment.
\begin{definition}
Given matrix $\X\in\R^{n\times d}$ and budget $k\in\{d,\dots,n\}$, let
$\Wc_k(\X)$ be the family of all random experimental designs $(S,\wbh)$ of
size $k$ such that: 
\begin{align*}
  \E_{S,\wbh,\y}\big[\wbh(\y_S)\big]
    = \argmin_\w\E\big[\|\X\w-\y\|^2\big]
    = \X^\dagger\E[\y]\quad\text{for
  all }\ \y\in\Fc.
\end{align*}
\end{definition}
In Appendix \ref{a:minimax} we show that the least squares estimator
$\wols(\y|\X)=\X^\dagger\y$ is 
the \textit{minimum variance unbiased estimator} (MVUE) among all
estimators with unrestricted budget, i.e., $\Wc_n(\X)$. 
\begin{proposition}\label{p:mvue}
Given any full rank matrix $\X\in\R^{n\times d}$ and any random function
$\wbh:\R^n\rightarrow\R^d$,
\begin{align*}
\text{if}\quad
  \E_{\y,\wbh}[\wbh(\y)]=\X^\dagger\E[\y]\quad \forall_{\y\in\Fc},
  \quad\text{then}\quad
  \Var\big[\wbh(\y)\big]\succeq\Var\big[\wols(\y|\X)\big]\quad \forall_{\y\in\Fc}.
\end{align*}
\end{proposition}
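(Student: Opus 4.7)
The plan is to reduce this claim to the standard Gauss–Markov-style argument by first exploiting that the unbiasedness hypothesis is required to hold for \emph{every} $\y\in\Fc$, including the degenerate case of a deterministic (point-mass) response vector. Since any fixed $\y_0\in\R^n$ viewed as a point mass lies in $\Fc$, the assumption forces
\begin{align*}
\E_{\wbh}\big[\wbh(\y_0)\big] \;=\; \X^\dagger \y_0 \qquad \text{for every } \y_0\in\R^n.
\end{align*}
In other words, the randomness of $\wbh$ averages out to the least squares map $\y\mapsto\X^\dagger\y$ \emph{pointwise}, not merely in expectation over $\y$. This is the key step that unlocks everything else; the remainder is essentially the total-variance decomposition.

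Given this, I would write $\wbh(\y) = \wols(\y|\X) + R$, where $R \defeq \wbh(\y)-\X^\dagger\y$ satisfies $\E_{\wbh}[R\mid \y] = \zero$ for every realization of $\y$ by the previous step. Setting $\mu \defeq \X^\dagger\E[\y]$, one expands
\begin{align*}
\Var\big[\wbh(\y)\big] \;=\; \Var\big[\wols(\y|\X)\big] \;+\; \E\big[(\wols(\y|\X)-\mu)R^\top\big] \;+\; \E\big[R(\wols(\y|\X)-\mu)^\top\big] \;+\; \E[RR^\top].
\end{align*}
The two cross terms vanish by iterated expectation: conditioning on $\y$ fixes $\wols(\y|\X)-\mu$ while $\E_{\wbh}[R\mid\y]=\zero$. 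What remains is $\Var[\wbh(\y)] = \Var[\wols(\y|\X)] + \E[RR^\top]$, and $\E[RR^\top]\succeq\zero$ as a second-moment matrix, which delivers the PSD inequality.

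The only delicate point I anticipate is the first step, where one must justify applying the hypothesis to point-mass distributions; this should be immediate from the definition of $\Fc$ as \emph{all} random vectors with finite second moment (which includes constants). Everything after that is a clean covariance computation with no regularity issues, since finite second moments of $\y$ and of $\wbh(\y)$ (the latter implicit in the claim that $\Var[\wbh(\y)]$ exists) let us interchange conditional expectations freely. I would also note, as a remark, that the argument shows a strictly stronger statement: the excess covariance $\Var[\wbh(\y)]-\Var[\wols(\y|\X)]$ equals $\E[RR^\top]$, so equality holds iff $\wbh(\y)=\wols(\y|\X)$ almost surely for each fixed $\y$.
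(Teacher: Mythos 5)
Your proof is correct and follows essentially the same route as the paper: both arguments hinge on the observation that point masses belong to $\Fc$, which forces $\E_{\wbh}[\wbh(\y)\mid\y]=\wols(\y|\X)$ pointwise, and then conclude via the conditional (total-variance) decomposition in which the cross terms vanish and the excess term $\E\big[(\wbh-\wols)(\wbh-\wols)^\top\big]$ is positive semi-definite. No gaps.
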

It is thus natural to
minimize the excess mean squared error incurred by an
unbiased estimator with a restricted budget compared to that of
$\wols(\y|\X)$.
Since we take a maximum over all response vectors in $\Fc$, we
normalize this by the noise
$\E_\y\big[\|\xib_{\y|\X}\|^2\big]$ (equal to $n\sigma^2$ in the
classical setting).
To avoid division by zero, we exclude all fixed vectors in the
  column span of $\X$, denoted $\mathrm{Sp}(\X)\subseteq\R^n$.
  \vspace{-2mm}
\begin{definition}\label{d:minimax}
Let the minimax-optimal value of
experimental design for $\X\in\R^{n\times d}$,  $d\leq k\leq n$
be:  
\begin{align*}
R_k^*(\X)&\defeq
  \min_{(S,\wbh)\in\Wc_k(\X)}\ \max_{\y\in\Fc\backslash\mathrm{Sp}(\X)}\,\frac{\mathrm{MSE}\big[\wbh(\y_S)\big]-
     \mathrm{MSE}\big[\wols(\y|\X)\big]
  }{\E_\y\big[\|\xib_{\y|\X}\|^2\big]},
\end{align*}
where $\MSE{\wbh}$ for any unbiased estimator $\wbh$ denotes $\E\big[\|\wbh-\E[\wbh]\|^2\big]$.
\end{definition}
\begin{proposition}\label{p:minimax-bounded}
The following are true if $\X$ denotes a full rank $n\times d$ matrix and
$\phi=\tr\big((\X^\top\X)^{-1}\big)$:

1. \  For any $\X$ and $d\leq k\leq n$,  we have $0\leq
R_k^*(\X)<\infty$;
    
2. \  There is $C>0$ such that for any $\X$ and
$k\geq C\cdot d\log n$, we have $R_k^*(\X)\leq C\cdot \phi/k$;

3. \ For any $n$, $d$ and $\epsilon\in(0,1)$, there is $\X$
    s.t.~if $k^2<\epsilon nd/3$ then $R_k^*(\X)\geq
    (1-\epsilon)\cdot\phi/k$.
\end{proposition}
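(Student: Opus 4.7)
For (1), non-negativity follows from Proposition~\ref{p:mvue}: every $(S,\wbh) \in \Wc_k(\X)$ also belongs to $\Wc_n(\X)$ (simply ignore the entries of $\y$ outside $S$), so the MVUE property forces its variance to pointwise dominate that of $\wols(\y|\X)$. Finiteness follows by exhibiting a single valid design with bounded worst-case ratio; size-$k$ volume sampling provides an unbiased estimator whose variance is controlled uniformly in $\y$ by a rational function of its second moments. For (2), we invert Theorem~\ref{t:mse}: it furnishes a design of size $k = C_1(d\log n + \phi/\epsilon)$ with ratio at most $\epsilon$, so whenever $k \geq 2C_1 d\log n$ we may take $\epsilon = 2C_1 \phi/k$; setting $C = 2C_1$ gives $R_k^*(\X) \leq C\phi/k$.

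For (3), the hard instance has uniform leverage scores: when $d \mid n$ take $\x_i = \e_{((i-1)\bmod d)+1}$, giving $\X^\top\X = (n/d)\I_d$, $\phi = d^2/n$, and $\|\X^\dagger\e_i\|^2 = \phi/n$ uniformly in $i$ (for general $n,d$, any equal-norm tight frame works). Specializing to $\y \sim \Nc(\zero,\sigma^2\I)$ gives $\w^* = \zero$, $\E_\y[\|\xib_{\y|\X}\|^2] = n\sigma^2$, and $\MSE{\wols(\y|\X)} = \phi\sigma^2$, so the ratio against this particular $\y$ lower-bounds $R_k^*(\X)$. Before bounding it, I would reduce to linear deterministic designs without loss: first replace $\wbh$ by its conditional expectation $f_S(\y_S) = \E_{\wbh|S,\y_S}[\wbh(\y_S)]$ (Jensen in the internal randomness can only decrease MSE), then project each $f_S$ in $L^2(\R^k,\mathrm{Gauss})$ onto linear functions of $\y_S$, obtaining $L_S(\y_S) = A_S \y_S$. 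The residuals $r_S = f_S - L_S$ are $L^2(\mathrm{Gauss})$-orthogonal to every linear function of $\y$, yet $\sum_S p(S)\,r_S(\y_S) = \X^\dagger \y - \sum_S p(S)\,L_S(\y_S)$ is itself linear in $\y$; being both in and orthogonal to the linear subspace, it vanishes, so $\{L_S\}$ remains pointwise unbiased with no larger Gaussian MSE.

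Writing the reduced estimator as $\wbh(\y_S) = B_S\,\y$ with $B_S \in \R^{d\times n}$ supported on $S$, unbiasedness reads $\E_S[B_S] = \X^\dagger$ and the Gaussian MSE equals $\sigma^2\,\E_S[\|B_S\|_F^2]$. For each column, conditioning on $\{i \in S\}$ and applying Jensen gives $\E_S[\|b_S^{(i)}\|^2] \geq \|\X^\dagger\e_i\|^2/q_i$ with $q_i = \Pr(i \in S)$; summing, using $\|\X^\dagger\e_i\|^2 = \phi/n$, $\sum_i q_i \leq k$, and Cauchy--Schwarz in the form $\sum_i 1/q_i \geq n^2/\sum_i q_i \geq n^2/k$ yields $\E_S[\|B_S\|_F^2] \geq n\phi/k$. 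Thus the excess MSE is at least $\sigma^2\phi(n-k)/k$, and dividing by $n\sigma^2$ gives $R_k^*(\X) \geq (1-k/n)\phi/k$. The hypotheses $k \geq d$ and $k^2 < \epsilon nd/3$ jointly force $\epsilon \geq 3d/n$, whence $(k/n)^2 < \epsilon d/(3n) \leq \epsilon^2/9$, so $k/n < \epsilon/3 \leq \epsilon$, completing the bound $R_k^*(\X) \geq (1-\epsilon)\phi/k$.

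The main obstacle is the linear-deterministic reduction while preserving pointwise (rather than merely $L^2$) unbiasedness; the resolution hinges on the Hermite observation that $\sum_S p(S)\,r_S(\y_S)$ is simultaneously in and orthogonal to the linear subspace of $L^2(\R^n,\mathrm{Gauss})$, forcing it to be identically zero. With that in hand, the remaining pieces---conditional Jensen on each column of $B_S$ and Cauchy--Schwarz exploiting $\sum_i q_i \leq k$---are routine computations, and the threshold $k^2 < \epsilon nd/3$ is precisely what converts the natural $(1-k/n)$ prefactor into the claimed $(1-\epsilon)$.
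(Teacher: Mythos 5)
Your parts 1 and 2 track the paper's own proof: non-negativity via Proposition~\ref{p:mvue}, finiteness by exhibiting a volume-sampling design, and part 2 by inverting Theorem~\ref{t:mse} with $\epsilon=2C_1\phi/k$. The only soft spot there is that you assert rather than prove the uniform control needed for finiteness; the paper makes it concrete by bounding $\|(\S_\pi\X)^\dagger\S_\pi\xib\|^2\leq 4n\,\|\xib\|^2\,\tr\big((\X^\top\S_\pi^\top\S_\pi\X)^{-1}\big)$ and invoking Lemma~\ref{l:expectations} to get $R_k^*(\X)\leq 4n\frac{k}{k-d+1}\phi$, so you should supply that one-line computation. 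Part 3 is where you genuinely diverge, and your argument is correct. The paper imports the Avron--Boutsidis construction (Theorem~\ref{t:lower}), on which every rank-$d$ subset $S$ of size $k$ has large $\tr\big((\X_S^\top\X_S)^{-1}\big)$, and reduces the minimax value over arbitrary randomized unbiased designs to that combinatorial quantity via a Bayesian detour: embed the Gaussian response family in one with prior $\w\sim\Nc(\zero,\sigma_\w^2\I)$, apply the MMSE bound $\E\big[\|\wbh(\y_S)-\w\|^2\,|\,S\big]\geq\tr\big((\X_S^\top\X_S+(1/\sigma_\w^2)\I)^{-1}\big)$ conditionally on each realized subset, and let $\sigma_\w\to\infty$. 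You instead build an explicit hard instance (unit-norm tight frame with $\X^\top\X=(n/d)\I$, so $\|\X^\dagger\e_i\|^2=\phi/n$ uniformly), test against $\y\sim\Nc(\zero,\sigma^2\I)$, and dispose of arbitrary nonlinear randomized estimators by conditional expectation followed by $L^2(\mathrm{Gauss})$ projection onto affine functions, using the observation that the aggregated residual is simultaneously affine and orthogonal to all affine functions (the cross terms $\E[r_S(\y_S)\,y_j]$ for $j\notin S$ vanish by independence), hence identically zero, so pointwise unbiasedness survives. The ensuing Jensen-plus-Cauchy--Schwarz step gives the clean bound $(1-k/n)\,\phi/k$, and $k\geq d$ with $k^2<\epsilon nd/3$ forces $k<\epsilon n/3$, which suffices. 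Comparing the two: the paper's MMSE route needs no structural reduction of the estimator class but leans on an external theorem and the auxiliary parameter $\alpha$; your route is self-contained, isolates where the hardness comes from (uniform $\|\X^\dagger\e_i\|^2$ forces $\sum_i 1/q_i\geq n^2/k$), and yields a slightly cleaner prefactor, at the cost of the linearization argument, which is the one step a careful reader will want written out in full (including the existence of the equal-norm tight frame when $d\nmid n$).
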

	\vspace{-1mm}
Part 2 of Proposition \ref{p:minimax-bounded} is an immediate
corollary of Theorem \ref{t:mse}, whereas part 3 is an application of a
matrix inequality of \cite{avron-boutsidis13}, see details in Appendix \ref{a:minimax}. Note that if we defined $\Fc$ as the family of all
random vectors $\y$ such that the noise  $\xib_{\y|\X}$ is
i.i.d.~centered Gaussian (with any variance), then in this case the
least squares estimator would also be the MVUE, and the above
definition would become equivalent to the classical A-optimality criterion.
Even in this special case, finding an exactly optimal design is hard
(to our knowledge, NP-hardness has not been established),
although efficient approximation algorithms exist for A-optimality (see Section
\ref{s:related-work}). Similar questions can be asked about
minimax-optimal design, however without any
restrictions on the design procedure, this task appears daunting. In
Section \ref{s:vol} we present one such restriction based on
``volume sampling'' which leads to a family of efficient
unbiased estimators that we used in Theorems \ref{t:mse} and
\ref{t:mspe}.


\subsection{Construction and efficiency of random experimental designs}\label{ss:RED}
The random experimental design used in Theorems
\ref{t:mse} and \ref{t:mspe} consists of two primary components:
	\vspace{-1mm}
\begin{enumerate}
  \item \textit{volume sampling:} the initial few experiments are drawn
    from a joint sampling distribution over 
    sets $S\subseteq [n]$ of size $d$ such that $\Pr(S)\propto
    \det(\X_S)^2$;
	\vspace{-1mm}
  \item \textit{i.i.d.~sampling:} the remaining $k-d$ experiments are
    sampled independently from a carefully chosen distribution
    $q=(q_1,\dots,q_n)$. 
  \end{enumerate}
	\vspace{-1mm}
  While it is mainly the i.i.d.~sampling that is responsible for 
bounding the sample size $k$, volume sampling is
  necessary for establishing both the unbiasedness and the expected
  bounds. The key novelty of our analysis is using volume sampling to
  control the MSE in the \textit{tail} of the
  distribution, and using the concentration properties of
  i.i.d.~sampling to bound it in the \textit{bulk} of the
  distribution (see Section \ref{s:proofs}). The i.i.d.~sampling
  distribution $q$ used in the proof is a mixture of uniform
  distribution with two importance sampling techniques:
  \begin{enumerate}
  \item  \textit{Leverage score sampling}: $\Pr(i)=p_i^{\mathrm{lev}}
\defeq \frac1d \x_i^\top(\X^\top\X)^{-1}\x_i$ for
  $i\in[n]$. This is a standard sampling method which has been used in
  obtaining bounds for worst-case linear regression.
  \item \textit{Inverse score sampling}: $\Pr(i)= p_i^{\mathrm{inv}}\defeq
    \frac1\phi\x_i^\top(\X^\top\X)^{-2}\x_i$ for
    $i\in[n]$. This is a novel sampling technique which is essential
    for achieving $O(\phi/\epsilon)$ sample size for small $\epsilon$.
  \end{enumerate}
As discussed earlier, having chosen a design, we may wish to
produce multiple independent samples of it, for example to construct
an averaged estimator. Thus, we break down the computational cost into
the preprocessing cost (incurred once per given matrix $\X$) and
sampling/estimation cost (incurred every time a new estimator is produced).
The estimation step simply requires computing a least squares estimator
from $k$ samples, which costs $O(kd^2)$. The
preprocessing involves all the calculations necessary to construct the
sampling distributions. Both of the above importance
sampling distributions can be computed exactly in time $O(nd^2)$ or
approximately in time $O(nd\log n +d^3\log d)$ using standard sketching
techniques \citep[see][]{DMMW12_JMLR}. Once they are obtained, the sampling cost is
negligible. On the other hand, for volume sampling both preprocessing
and sampling cost can be significant.
\cite{leveraged-volume-sampling} showed that a volume sampled set of size
$d$ can be generated in time $O(d^4)$ by selecting it from a sequence
of $O(d^2)$ i.i.d.~samples from the leverage score distribution
$p^{\mathrm{lev}}$ (see Theorem 6 there). We improve on
this in the following result.
\begin{theorem}\label{t:bottom-up}
  For any $\X$ and $q$ such that $q_i\geq
  \frac12 p_i^{\mathrm{lev}}$, there is an algorithm which,
  given matrix $\X^\top\X$ and a stream of i.i.d~samples from
  $q$, returns a set $S$
  s.t.~$\Pr(S)\propto \det(\X_S)^2$, and w.p.~at least $1-\delta$
  it runs in time $O\big(d^3\log d \log\frac1\delta\big)$ using
  $O(d\log d\log\frac1\delta)$ i.i.d.~samples. 
\end{theorem}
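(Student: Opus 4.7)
The plan is to implement the standard bottom-up (projection DPP) sampler for volume sampling, where each of the $d$ conditional draws is performed by rejection sampling from the i.i.d.~stream. Let $S_j\defeq\{i_1,\ldots,i_j\}$ denote the partial set after $j$ indices have been chosen, and define the conditional kernel
\[
Q_j \defeq (\X^\top\X)^{-1} - (\X^\top\X)^{-1}\X_{S_j}^\top\bigl(\X_{S_j}(\X^\top\X)^{-1}\X_{S_j}^\top\bigr)^{-1}\X_{S_j}(\X^\top\X)^{-1},
\]
which satisfies $Q_0=(\X^\top\X)^{-1}$, $0\preceq Q_j\preceq(\X^\top\X)^{-1}$, and $\tr(\X^\top\X\,Q_j)=d-j$. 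A classical Schur-complement telescoping shows that drawing $i_{j+1}$ with probability $\x_i^\top Q_j\x_i/(d-j)$ at each step yields the joint law $\Pr(S)=\det(\X_S)^2/\det(\X^\top\X)$, which is exactly $d$-volume sampling.

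To execute step $j{+}1$ using the stream, I would propose $i\sim q$ and accept it with probability $A_{j,i}\defeq \x_i^\top Q_j\x_i/(2d\,q_i)$. The hypothesis $q_i\geq \frac{1}{2d}\x_i^\top(\X^\top\X)^{-1}\x_i$ together with $Q_j\preceq(\X^\top\X)^{-1}$ yields $A_{j,i}\in[0,1]$, and the marginal acceptance probability is $\alpha_{j+1}=\sum_iq_iA_{j,i}=\tr(\X^\top\X\,Q_j)/(2d)=(d-j)/(2d)$, so an accepted index follows the correct conditional law. The number of proposals $N_{j+1}$ consumed in step $j{+}1$ is therefore $\mathrm{Geom}(\alpha_{j+1})$ with mean $2d/(d-j)$, and the expected total across all $d$ steps is $\sum_{j=0}^{d-1}\E[N_{j+1}]=2dH_d=O(d\log d)$.

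Each proposal is processed in $O(d^2)$: evaluating $\x_i^\top Q_j\x_i$ given the stored $d\times d$ matrix $Q_j$, and the subsequent update $Q_j\mapsto Q_{j+1}$ is a rank-one correction (Sherman--Morrison applied to the Schur complement). Multiplying the per-proposal cost by the total proposal count yields $O(d^3\log d\,\log(1/\delta))$ runtime. I expect the main obstacle to be obtaining the clean $\log(1/\delta)$ factor (rather than $\log(d/\delta)$) in the sample complexity: because $\alpha_{j+1}$ shrinks all the way to $1/(2d)$ in the final step, a naive per-step union bound would lose an extra $\log d$. To avoid this I would apply an MGF/Chernoff tail bound directly to the sum $\sum_{j=0}^{d-1}N_{j+1}$ of the $d$ independent heterogeneous geometrics (a negative-binomial-style argument), exploiting that both its mean and its standard deviation are $O(d\log d)$, so that exponential concentration beyond the mean costs only $\log(1/\delta)$ extra samples.
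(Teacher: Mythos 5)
Your proposal is correct and is essentially the paper's own proof: your $Q_j$ is exactly the matrix $\A_{j+1}$ maintained by Algorithm \ref{alg:bottom-up} (the rank-one update is the Schur complement you describe), the acceptance probability $\x_i^\top Q_j\x_i/(2d\,q_i)$ and the marginal acceptance rate $(d-j)/(2d)$ match, and the runtime analysis via $2dH_d$ expected trials at $O(d^2)$ per trial is the same. Your closing remark about concentrating the sum of heterogeneous geometrics directly (rather than union-bounding per step) is precisely what the paper's appeal to ``standard tail bounds for a sum of geometric random variables'' amounts to, so there is no gap.
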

Our algorithm improves on the best known sampling time for volume
sampling from $O(d^4)$ to $O(d^3\log d)$, which has important
implications for other applications of this distribution such as
determinantal point processes (see Section \ref{s:vol} for the proof
and further discussion). To establish correctness of the sampling,
this algorithm requires the exact computation of matrix $\X^\top\X$,
which typically costs $O(nd^2)$. The algorithm of
\cite{leveraged-volume-sampling} only requires an approximation of
this matrix, which can be computed in time $O(nd\log n + d^4\log d)$.
Similar improvements in the preprocessing cost for our volume sampling
algorithm may be possible, but we leave this as an open question.

\section{Related work}
\label{s:related-work}

There is a large body of related work, and we describe only that which
most informed our approach.

\paragraph{Classical experimental design.}

Many optimality criteria have been considered as functions $F(S)$ of a
subset $S\subseteq [n]$ \citep{optimal-design-pukelsheim}, assuming
that $y_i=\x_i^\top\w^*+\xi_i$ where $\xi_i\sim\Nc(0,\sigma^2)$, and
these typically have natural interpretations for the least squares
estimator.  
Recent work has studied the tractability of finding an approximately
optimal subset $\widehat{S}$ of size $k$, i.e., such that
$F(\widehat{S}) \leq (1+\epsilon) \min_{S:\,|S|=k}F(S)$.  
For example, 
\cite{near-optimal-design} showed that polynomial time algorithms are
possible for many classical optimality criteria, such as A-optimality,
$F_A(S)=\tr((\X_S^\top\X_S)^{-1})$, D-optimality,
$F_D(S)=\det(\X_S^\top\X_S)^{-1}$, V-optimality,
$F_V(S)=\tr(\X(\X_S^\top\X_S)^{-1}\X^\top)$ and others, as long as
$k=\Omega(d/\epsilon^2)$;  
\cite{tractable-experimental-design} showed tractable approximability of A/V-optimality for $k=\Omega(d^2/\epsilon)$; and 
this was later improved by \cite{proportional-volume-sampling} to $k=\Omega(d/\epsilon + (\log\epsilon^{-1})/\epsilon^2)$. 
Robust variants of experimental design have been considered to address more general response models. 
In particular, 
\cite{robust-design} assume that the covariance matrix of the noise is
known only approximately and defines a minimax-type criterion where
the maximization goes over a neighborhood of that covariance; and  
\cite{v-optimal} use an active learning procedure to estimate the 
individual noise variances before constructing the design.  
None of these procedures, however, are truly agnostic to the response
model.

\paragraph{Subset selection for worst-case regression.} 

Subset selection has been studied extensively for both statistical and worst-case regression models.
Perhaps most relevant is the work of \cite{coresets-regression}, which
showed a lower bound for any deterministically chosen subset $S$ and
function $\wbh$, when the hidden response vector $\y$ is arbitrary but
fixed.
This implies that random sampling is necessary in this setting.
In the context of \emph{randomized numerical linear algebra}
\citep[RandNLA; see][]{woodruff2014sketching, DM16_CACM}, it was shown by
\cite{drineas2006sampling} that a random sampling algorithm based on the statistical leverage scores
constructs an estimator $\wbh(\y_S)$ which, with constant probability, achieves $\|\wbh(\y_S)-\w^*\|^2\leq\epsilon\cdot \|\X\w^*-\y\|^2$ by using $k=O(d\log d + \lambda_{\max}((\X^\top\X)^{-1})\cdot d/\epsilon)$ samples.
The estimator we propose in Theorem \ref{t:mse} achieves the same
bound for $k=O(d\log d + \phi/\epsilon)$. 
Since 
$ 
\phi=\tr\big((\X^\top\X)^{-1}\big) \leq
  \lambda_{\max}\big((\X^\top\X)^{-1}\big)\cdot d \leq   \phi\cdot d  ,
$ 
our result is better by up to a factor of $d$.


\paragraph{Statistical versus algorithmic approaches.}

 \cite{ping-ma2014} and \cite{garvesh2015} were the first to 
 consider statistical guarantees (such as A-optimality) that can be
 obtained by sampling methods developed for RandNLA (primarily leverage
 score sampling), contrasting them with some common worst-case guarantees.  
However, these works treat those two settings separately (in particular, the statistical setting is limited to i.i.d.~Gaussian noise), rather than putting them under one umbrella of minimax experimental design, as we do.
Subsequently, \cite{chen2017condition} showed loss bounds for
worst-case regression which extend to a randomized response model
that is comparable to ours. 
They give a randomized estimator 
({\em not unbiased}) that with constant probability
achieves the following bound on the square loss:
$L(\wbh(\y_S))\leq(1+\epsilon)L(\w^*)$, for sample size
$k=O(d/\epsilon)$, where $L(\w)=\|\X\w-\y\|^2$. 
In contrast we obtain an {\em unbiased} estimator achieving
the same bound \emph{in expectation} with only slightly larger
sample size of $k=O(d\log n+d/\epsilon)$.

\paragraph{Constant probability versus unbiased expectations.}
Unlike our Theorems \ref{t:mse} and \ref{t:mspe}, most results in
RandNLA are stated to hold with high or constant probability
\citep{woodruff2014sketching, DM16_CACM, RandNLA_PCMIchapter_TR} as opposed to in 
expectation, and they do not provide unbiased estimators, which often
makes them incomparable to statistical approaches.
In fact, expected bounds are often impossible for these
techniques \citep[e.g., for leverage score sampling; see][]{unbiased-estimates-journal}.
Unbiased estimators were first introduced to worst-case regression by
\cite{unbiased-estimates}, who gave the first \textit{expected}
square loss bound for sample size of $k=O(d^2/\epsilon)$ via volume
sampling.  
Subsequently, \cite{leveraged-volume-sampling} demonstrated an
unbiased estimator with a \textit{constant probability} loss bound for
sample size $k=O(d\log d+d/\epsilon)$. Our result builds on the latter
by obtaining an unbiased estimator with an \textit{expected} loss
bound for $k=O(d\log n+d/\epsilon)$.

\section{Rescaled volume sampling}\label{s:vol}

We now discuss the sampling distribution introduced by
\cite{leveraged-volume-sampling}, based on earlier work by
\cite{avron-boutsidis13}, that allows for constructing
unbiased least squares estimators. 
\begin{definition}
  \label{d:vs}
  Given full rank matrix $\X\in\R^{n\times d}$ and a distribution
  $q=(q_1,\dots,q_n)$ s.t.~$q_i> 0$ for all $i\in[n]$, we define
$q$-rescaled volume sampling of size $k\geq d$, written
$\Vol_q^k(\X)$, as a distribution 
  over index sequences $\pi=(\pi_1,\dots,\pi_k)\in[n]^k$ such that:
  \begin{align*}
    \Pr(\pi) =
    \frac{\det\!\big(\X^\top\S_\pi^\top\S_\pi\X\big)}{\frac{d!}{k^d}{k\choose
    d}\det(\X^\top\X)}\prod_{i=1}^kq_{\pi_i},\quad\text{where}\quad
    \S_\pi = \begin{bmatrix}\frac1{\sqrt{k q_{\pi_1}}}\e_{\pi_1}^\top\\
      \vdots\\\frac1{\sqrt{k q_{\pi_k}}}\e_{\pi_k}^\top\end{bmatrix}
    \in\R^{k\times n}. 
  \end{align*}
\end{definition}
It is easy to see that for $k=d$, the matrix $\S_\pi\X$ is square 
and thus
\begin{align*}
  \det(\X^\top\S_\pi^\top\S_\pi\X)
    =\det(\S_\pi\X)^2=\frac{\det(\X_\pi)^2}{d^d\prod_i q_{\pi_i}},
\end{align*}
where $\X_\pi$ selects the rows indexed by $\pi$ from $\X$. So the
distribution $\Vol_q^d(\X)$ is the same for every $q$. For this
reason, we will write it simply as $\Vol^d(\X)$.
We mention the following
recently shown results regarding rescaled volume sampling which we use
later in the proofs. 
\begin{lemma}[\citeauthor{leveraged-volume-sampling}, \citeyear{leveraged-volume-sampling}]\label{l:expectations}
For any $\X$, $q$ and $k$ as in Definition \ref{d:vs}, if
$\pi\sim\Vol_q^k(\X)$, then  
\begin{align}
  \E\big[(\S_\pi\X)^\dagger \S_\pi\big] &= \X^\dagger,\label{eq:unbiased}\\
  \E\big[(\X^\top\S_\pi^\top\S_\pi\X)^{-1}\big]
  &\preceq \frac k{k\!-\!d\!+\!1}(\X^\top\X)^{-1}.\label{eq:square-inverse}
\end{align}
\end{lemma}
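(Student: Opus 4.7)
The plan is to exploit the adjoint identity $\adj(\Sigmab_\pi) = \det(\Sigmab_\pi)(\Sigmab_\pi)^{-1}$ so that the $\det(\Sigmab_\pi)$ factor appearing in $\Pr(\pi)$ exactly cancels the determinant denominator of $(\Sigmab_\pi)^{-1}$. After this cancellation both claimed expectations reduce to \emph{polynomial} sums over $\pi \in [n]^k$, which I then evaluate by Cauchy--Binet plus combinatorial bookkeeping. I will also use the fact that $\adj(\M)\succeq 0$ whenever $\M$ is PSD, which is what converts the equality I obtain into the inequality in \eqref{eq:square-inverse}.

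For \eqref{eq:square-inverse}, I would test against an arbitrary $\v\in\R^d$ and upper bound by dropping the (nonnegative) contributions from singular $\Sigmab_\pi$:
\begin{align*}
  \v^\top\E[(\Sigmab_\pi)^{-1}]\v
  \ \le\ \frac{1}{Z}\sum_{\pi\in[n]^k}\v^\top\adj(\Sigmab_\pi)\v\,\prod_j q_{\pi_j},
\end{align*}
where $Z=\frac{d!}{k^d}\binom{k}{d}\det(\X^\top\X)$. The matrix determinant lemma gives $\v^\top\adj(\Sigmab_\pi)\v = \det(\Sigmab_\pi+\v\v^\top)-\det(\Sigmab_\pi)$, and Cauchy--Binet on the augmented matrix $[\S_\pi\X;\v^\top]$ rewrites this as a sum over $(d{-}1)$-subsets $T\subseteq[k]$ of terms $\prod_{t\in T}(kq_{\pi_t})^{-1}\det([\X_{\pi_T};\v^\top])^2$. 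Summing over $\pi$ collapses via $\sum_i q_i=1$ and a second Cauchy--Binet applied to $[\X;\v^\top]$, which evaluates the inner sum as $(d-1)!\det(\X^\top\X)\,\v^\top(\X^\top\X)^{-1}\v$ for each $T$. The combinatorial ratio $\binom{k}{d-1}/\binom{k}{d}\cdot k/d$ simplifies exactly to $k/(k-d+1)$.

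For \eqref{eq:unbiased}, the symmetry of $\Pr(\pi)$ under permutations of $\pi$ reduces $\sum_t$ to $k$ copies of the $t{=}1$ term, yielding
\begin{align*}
  \E[(\S_\pi\X)^\dagger\S_\pi]
  \ =\ \frac{1}{Z}\sum_{a\in[n]}\biggl[\sum_{\pi'\in[n]^{k-1}}\adj(\Sigmab_{(a,\pi')})\,\x_a\,\prod_j q_{\pi'_j}\biggr]\e_a^\top.
\end{align*}
The decisive step is the Sherman--Morrison-type identity $\adj(\A+\u\u^\top)\u = \adj(\A)\u$, which holds for any square $\A$ (a polynomial identity, directly verifiable for invertible $\A$ via Sherman--Morrison and then extended to the singular case by continuity). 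Applied with $\u=(kq_a)^{-1/2}\x_a$, it strips the $a$-dependence from the adjoint, so the inner bracket depends on $a$ only through a shared prefactor of $\x_a$. Evaluating the $a$-free multiplier $\sum_{\pi'}\adj(\A(\pi'))\prod_j q_{\pi'_j}$, where $\A(\pi')=\sum_{s=1}^{k-1}(kq_{\pi'_s})^{-1}\x_{\pi'_s}\x_{\pi'_s}^\top$, by the same Cauchy--Binet machinery used above yields exactly $Z(\X^\top\X)^{-1}$. The whole expression then equals $(\X^\top\X)^{-1}\X^\top = \X^\dagger$ column by column.

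The main obstacle is the combinatorial bookkeeping: the normalization $Z\propto k^{-d}\binom{k}{d}$ must be balanced against subset counts $\binom{k}{d-1}$ and $\binom{k-1}{d-1}$, and the sharp factor $k/(k-d+1)$ in \eqref{eq:square-inverse} only emerges after careful cancellation of these binomials (giving it the feel of a tight identity, even though the PSD slack from singular $\pi$ prevents equality). The Sherman--Morrison identity is the key insight that lets \eqref{eq:unbiased} fall out cleanly rather than requiring an induction on $k$ or a direct reduction to classical volume sampling; verifying that it remains valid when $\A$ is singular is the crucial observation that makes the argument uniform in $k$.
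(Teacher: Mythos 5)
The paper does not prove Lemma~\ref{l:expectations}; it is imported verbatim from \cite{leveraged-volume-sampling}, so there is no in-paper argument to compare against. Your derivation is a correct, self-contained reconstruction in the spirit of that source: the cancellation $\det(\Sigmab_\pi)\Sigmab_\pi^{-1}=\adj(\Sigmab_\pi)$ against the determinant in $\Pr(\pi)$, the matrix determinant lemma $\v^\top\adj(\M)\v=\det(\M+\v\v^\top)-\det(\M)$, and the double Cauchy--Binet collapse are exactly the right tools. I checked the bookkeeping: for \eqref{eq:square-inverse} the sum over $\pi$ yields $\binom{k}{d-1}\tfrac{(d-1)!}{k^{d-1}}\det(\X^\top\X)\,\v^\top(\X^\top\X)^{-1}\v$, and dividing by $Z=\tfrac{d!}{k^d}\binom{k}{d}\det(\X^\top\X)$ gives precisely $\tfrac{k}{k-d+1}$; for \eqref{eq:unbiased} the identity $\binom{k}{d}\tfrac{d!}{k^d}=\binom{k-1}{d-1}\tfrac{(d-1)!}{k^{d-1}}$ makes the multiplier come out to exactly $Z(\X^\top\X)^{-1}$, and $\sum_a\x_a\e_a^\top=\X^\top$ finishes it. The rank-one update identity $\adj(\A+\u\u^\top)\u=\adj(\A)\u$ is valid for singular $\A$ by polynomial continuity, as you say.

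One step you should make explicit in \eqref{eq:unbiased}: unlike \eqref{eq:square-inverse}, you need an \emph{equality} between the true expectation (a sum over $\pi$ with $\Pr(\pi)>0$ only) and the full adjugate sum over all $\pi\in[n]^k$. The extra terms are of the form $\adj(\Sigmab_\pi)\x_{\pi_t}\e_{\pi_t}^\top$ with $\Sigmab_\pi$ singular, and these do vanish, but for a reason you never state: if $\rank(\Sigmab_\pi)\le d-2$ then $\adj(\Sigmab_\pi)=\zero$, and if $\rank(\Sigmab_\pi)=d-1$ then $\adj(\Sigmab_\pi)=c\,\u\u^\top$ with $\u\in\ker(\Sigmab_\pi)$, and $\u^\top\Sigmab_\pi\u=\sum_t\tfrac{1}{kq_{\pi_t}}(\u^\top\x_{\pi_t})^2=0$ forces $\u^\top\x_{\pi_t}=0$ for every $t$, hence $\adj(\Sigmab_\pi)\x_{\pi_t}=\zero$. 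This is a one-line fix, not a flaw in the approach, but without it the passage from $\E[(\S_\pi\X)^\dagger\S_\pi]$ to your bracketed sum over all of $[n]^{k-1}$ is unjustified. With that observation added, the proof is complete.
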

The following random experimental design emerges as a natural
candidate for proving Theorem \ref{t:mse}:
\begin{align}
  S = \{\pi_1\}\cup\dots\cup\{\pi_k\},\quad 
    \wbh(\y_S) = (\S_\pi\X)^ \dagger\S_\pi\y
    ,\quad\text{where }\pi\sim\Vol_q^k(\X).\label{eq:design}
\end{align}
Note that since sequence $\pi$ may include repetitions whereas set $S$
may not (it is not a multi-set), the function 
$\wbh(\y_S)$ has to depend not only on $\y_S$ but also on the multiplicities of each
response in sequence $\pi$. Thus both set $S$ and function $\wbh(\cdot)$ 
in this design are in fact randomized and satisfy Definition \ref{d:rand}. 
Lemma \ref{l:expectations} shows that this design 
is unbiased for any $\y$, leading to a restricted notion of 
minimax-optimality which provides an upper-bound on $R_k^*(\X)$ (proof in
Appendix \ref{a:minimax}):
\begin{lemma}\label{l:reduction}
  Let $\Vc_k(\X)$ consist of all random experimental designs
  based on $q$-rescaled volume sampling as in \eqref{eq:design},
  parameterized by  distribution $q$, and let $\mathrm{Sp}(\X)$ be the
  column span of $\X$. Then: 
\begin{align*}
R_k^*(\X)\leq \min_{(S,\wbh)\in\Vc_k(\X)}\ 
  \max_{\y\in\R^n\backslash \mathrm{Sp}(\X)}\frac{\E_{S,\wbh}\big[\|\wbh(\y_S)-\wols(\y|\X)\|^2\big]}{\|\X\,\wols(\y|\X)-\y\|^2}.
\end{align*}
\end{lemma}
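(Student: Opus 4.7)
The plan is to chain four observations together. First, equation \eqref{eq:unbiased} in Lemma \ref{l:expectations} says that for any $\y$ and any choice of $q$, $\E_\pi[(\S_\pi\X)^\dagger\S_\pi] = \X^\dagger$, so every design in $\Vc_k(\X)$ is unbiased in the sense demanded by $\Wc_k(\X)$; hence $\Vc_k(\X) \subseteq \Wc_k(\X)$ and restricting the outer minimization in $R_k^*(\X)$ to $\Vc_k(\X)$ can only raise the value. It then remains to show that, for a fixed design $(S,\wbh) \in \Vc_k(\X)$, the inner maximum in $R_k^*(\X)$ (over random $\y \in \Fc \setminus \mathrm{Sp}(\X)$) is dominated by the simpler inner maximum on the right-hand side (over deterministic $\y \in \R^n \setminus \mathrm{Sp}(\X)$); call the latter quantity $R$.

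To do this I would condition on the random response $\y$. By \eqref{eq:unbiased} applied pointwise, $\E_\pi[\wbh(\y_S)\mid \y] = \X^\dagger \y = \wols(\y|\X)$, so in the expansion $\wbh(\y_S) - \w^* = (\wbh(\y_S) - \wols(\y|\X)) + (\wols(\y|\X) - \w^*)$ the cross term vanishes after taking $\E_\pi[\cdot\mid \y]$. Since $\wols(\y|\X) = \X^\dagger\y$ is itself unbiased for $\w^* = \X^\dagger\E[\y]$, iterating expectations gives the clean bias--variance-style identity
\begin{align*}
\MSE{\wbh(\y_S)} - \MSE{\wols(\y|\X)} = \E_\y\big[\E_\pi\big[\|\wbh(\y_S) - \wols(\y|\X)\|^2 \,\big|\, \y\big]\big].
\end{align*}
Each realization of $\y$ is a deterministic vector, so by the very definition of $R$ the inner conditional expectation is at most $R\,\|\X\wols(\y|\X) - \y\|^2$; realizations in $\mathrm{Sp}(\X)$ contribute zero to both sides because $(\S_\pi\X)^\dagger\S_\pi\X = \I_d$, so the excluded set causes no trouble. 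Integrating in $\y$ yields $\MSE{\wbh(\y_S)} - \MSE{\wols(\y|\X)} \leq R \cdot \E_\y[\|\X\wols(\y|\X) - \y\|^2]$.

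For the denominator I would use that $\wols(\y|\X) = \X^\dagger\y$ minimizes $\|\X\w - \y\|^2$ over $\w$ for each fixed realization of $\y$, whereas $\w^*$ is only the minimizer in expectation; hence pointwise $\|\X\wols(\y|\X) - \y\|^2 \leq \|\X\w^* - \y\|^2 = \|\xib_{\y|\X}\|^2$, and averaging gives $\E_\y[\|\X\wols(\y|\X) - \y\|^2] \leq \E_\y[\|\xib_{\y|\X}\|^2]$. Combining this with the previous display bounds the ratio appearing in the definition of $R_k^*(\X)$ by $R$ for every $\y \in \Fc \setminus \mathrm{Sp}(\X)$, and taking the maximum over $\y$ followed by the minimum over $(S,\wbh) \in \Vc_k(\X)$ closes the argument.

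The only step I expect to require real care is the conditional bias--variance identity: it relies on unbiasedness of $(\S_\pi\X)^\dagger\S_\pi$ \emph{conditional on $\y$}, which is precisely what \eqref{eq:unbiased} supplies but which would \emph{not} follow from merely the marginal unbiasedness that defines $\Wc_k(\X)$. The other subtlety is making sure the two denominators line up: the minimizing property of $\wols(\y|\X)$ is what pushes the inequality in the favorable direction, and without it the bound could fail. Once these two observations are in place, everything reduces to routine integration.
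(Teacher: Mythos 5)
Your proposal is correct and follows essentially the same route as the paper: the containment $\Vc_k(\X)\subseteq\Wc_k(\X)$ plus the conditional bias--variance decomposition $\MSE{\wbh(\y_S)}-\MSE{\wols(\y|\X)}=\E_\y\big[\E\big[\|\wbh(\y_S)-\wols(\y|\X)\|^2\,|\,\y\big]\big]$ and the pointwise bound $\|\X\wols(\y|\X)-\y\|^2\leq\|\X\w^*-\y\|^2$ are exactly the content of the paper's Lemma~\ref{l:fixed}, which you have simply inlined. The only cosmetic difference is that the paper factors this reduction into a separate lemma and cites it, while you also spell out the (correct) observation that realizations $\y\in\mathrm{Sp}(\X)$ contribute zero to both sides.
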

Even for the restricted minimax-optimality, finding the exact or even
approximate optimum $q^*$ is open. However,
in Section \ref{s:proofs} we bound the restricted minimax value by
selecting a particular distribution $q$ and utilizing the following
decomposition of $q-$rescaled volume sampling in the analysis. 
\begin{lemma}[\citeauthor{correcting-bias}, \citeyear{correcting-bias}]\label{l:augmenting}
For any $\X$, $q$ and $k$ as in Definition \ref{d:vs}, let
$\pi\sim\Vol^d(\X)$ and $\tilde{\pi}_1,\dots,\tilde{\pi}_{k-d}\simiid
q$. Finally let $\sigma$  
be a permutation of $(1,\dots,k)$ drawn uniformly at random. Then:
\begin{align*}
  \sigma\big(\pi_1,\dots,\pi_d,\tilde{\pi}_1,\dots,\tilde{\pi}_{k-d}\big)\sim
  \Vol_q^k(\X). 
\end{align*}
\end{lemma}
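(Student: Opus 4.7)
The plan is to match densities directly: compute the probability mass of an arbitrary sequence $\pi\in[n]^k$ under $\Vol_q^k(\X)$ using the formula in Definition \ref{d:vs}, then compute the same quantity under the augmenting procedure, and show the two agree term by term. The only real ingredient is Cauchy--Binet applied to the $k\times d$ matrix $\S_\pi\X$, combined with a short combinatorial argument for the uniform permutation step.

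First I would simplify the rescaled volume sampling density. Since $\S_\pi\X$ has $i$-th row $\frac{1}{\sqrt{kq_{\pi_i}}}\x_{\pi_i}^\top$, Cauchy--Binet gives
\begin{align*}
\det(\X^\top\S_\pi^\top\S_\pi\X)
=\sum_{T\in\binom{[k]}{d}}\det\bigl((\S_\pi\X)_T\bigr)^2
=\sum_{T\in\binom{[k]}{d}}\frac{\det(\X_{\pi_T})^2}{k^d\prod_{j\in T}q_{\pi_j}},
\end{align*}
where $\X_{\pi_T}$ is the $d\times d$ submatrix of $\X$ whose rows are $\x_{\pi_j}^\top$ for $j\in T$. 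Substituting this into Definition \ref{d:vs} and cancelling the factor $k^d$ in the normalizer, the density reduces to
\begin{align*}
\Pr_{\Vol_q^k(\X)}(\pi)
=\frac{1}{d!\binom{k}{d}\det(\X^\top\X)}\sum_{T\in\binom{[k]}{d}}\det(\X_{\pi_T})^2\prod_{i\notin T}q_{\pi_i}.
\end{align*}

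Next I would compute the density of the augmenting construction. Setting $k=d$ in Definition \ref{d:vs} shows that $\Vol^d(\X)$ has mass $\det(\X_{\pi'})^2/(d!\det(\X^\top\X))$ on $\pi'\in[n]^d$ independent of $q$. Conditioning on the uniform permutation $\sigma$ and summing, the probability that $\sigma(\pi_1',\ldots,\pi_d',\tilde\pi_1,\ldots,\tilde\pi_{k-d})$ equals a given $\pi$ is
\begin{align*}
\frac{1}{k!}\sum_{\sigma\in S_k}\frac{\det\bigl(\X_{(\pi_{\sigma(1)},\ldots,\pi_{\sigma(d)})}\bigr)^2}{d!\det(\X^\top\X)}\prod_{j=d+1}^k q_{\pi_{\sigma(j)}}.
\end{align*}
Group the permutations by the subset $T=\{\sigma(1),\ldots,\sigma(d)\}\subseteq[k]$ marking which positions of $\pi$ come from the $\Vol^d$ block. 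Since $\det(\X_{\cdot})^2$ is invariant under reordering of rows, each such $T$ contributes $d!(k-d)!$ equal terms, giving
\begin{align*}
\frac{d!(k-d)!}{k!}\sum_{T\in\binom{[k]}{d}}\frac{\det(\X_{\pi_T})^2}{d!\det(\X^\top\X)}\prod_{i\notin T}q_{\pi_i}
=\frac{1}{d!\binom{k}{d}\det(\X^\top\X)}\sum_{T}\det(\X_{\pi_T})^2\prod_{i\notin T}q_{\pi_i},
\end{align*}
which is exactly the expression from the first step.

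I do not expect any real obstacle here: no inequality, concentration, or algorithmic argument is needed, just a term-by-term identity. The only care required is bookkeeping with sequences versus subsets and tracking the factors of $k^d$, $d!$, and $\binom{k}{d}$ that arise from Cauchy--Binet, from the $\S_\pi$ scaling, and from averaging over the $k!$ permutations. Sequences $\pi$ with repeated indices pose no issue: they force $\det(\X_{\pi_T})^2=0$ whenever $T$ selects a repeated row, so the identity holds in all cases.
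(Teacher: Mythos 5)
Your proof is correct. Note that the paper does not prove Lemma \ref{l:augmenting} at all --- it is imported verbatim from the cited prior work (\citeauthor{correcting-bias}) --- so there is no in-paper argument to compare against. Your density-matching derivation is a valid, self-contained verification: the Cauchy--Binet expansion of $\det(\X^\top\S_\pi^\top\S_\pi\X)$ correctly cancels the $k^d$ and the per-row $1/(kq_{\pi_j})$ factors, the $\Vol^d(\X)$ mass $\det(\X_{\pi'})^2/(d!\det(\X^\top\X))$ is right, and the grouping of the $k!$ permutations into $\binom{k}{d}$ classes of size $d!(k-d)!$ (using that $\det(\X_{\pi_T})^2$ is permutation-invariant) reproduces the normalizer exactly, including the degenerate repeated-index sequences where both sides vanish on the offending terms.
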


\begin{wrapfigure}{r}{0.45\textwidth}
  \vspace{-3.5mm}
  \centering
\begin{minipage}{0.45\textwidth}
\floatname{algorithm}{\small Algorithm}
\begin{algorithm}[H] 
    \caption{\small (Bottom-up) volume sampling}\label{alg:bottom-up}
  \begin{algorithmic}[0]
    \STATE \textbf{input:} $\X\in\R^{n\times d}$, \ $q$, \ 
    $\A_1=(\X^\top\X)^{-1}$ 
    \STATE \textbf{output:} $\pi\sim\Vol^d(\X)$
    \STATE \textbf{for }$i=1..d$
    \STATE \quad\textbf{repeat}
    \STATE  \quad\quad Sample $\pi_i \sim q$
    \STATE \quad\quad Sample $a\sim
    \text{Bernoulli}\Big(\frac{\x_{\pi_i}^\top\A_i\x_{\pi_i}}{2d \,q_{\pi_i}}\Big)$
    \STATE \quad\textbf{until} $a=1$
    \STATE \quad$\A_{i+1} \leftarrow \A_i -
    \frac{\A_i\x_{\pi_i}\x_{\pi_i}^\top\A_i}{\x_{\pi_i}^\top\A_i\x_{\pi_i}}$
    \STATE \textbf{end for}
    \RETURN $\pi_1,\dots,\pi_d$
  \end{algorithmic}
\end{algorithm}
\end{minipage}
\vspace{-4mm}
\end{wrapfigure}
If distribution $q$
is sufficiently close to the leverage score sampling distribution
$p^{\mathrm{lev}}$, then even the initial volume sample of size $d$ can
be \textit{selected out of} an i.i.d.~sample of size $O(d\log d)$ as
shown in Algorithm \ref{alg:bottom-up}. This algorithm is a new
implementation of a classical method for sampling from a so-called
elementary determinantal point process, due to
\cite{dpp-independence}. To our knowledge, the best previously known
runtime for this method was $O(nd^2)$ for each produced volume sample
(see \cite{dpp-coreset}), whereas the runtime of this implementation is
$O(d^3\log d)$ (in addition to a preprocessing step which involves
computing distribution $q$ and matrix $(\X^\top\X)^{-1}$). Note that
for some applications of volume sampling, such as determinantal point
process sampling, one can often assume that $\X^\top\X=\I$ 
\citep[see][]{dpp-intermediate}, in which case the preprocessing becomes much
cheaper than $O(nd^2)$. We now prove Theorem
\ref{t:bottom-up} by establishing correctness and runtime of Algorithm
\ref{alg:bottom-up}.

\begin{proofof}{Theorem}{\ref{t:bottom-up}}
Since $\A_i\preceq (\X^\top\X)^{-1}$, then $\x_{\pi_i}^\top\A_i\x_{\pi_i}/(2d\, q_{\pi_i})
\leq p_{\pi_i}^{\mathrm{lev}}/(2q_{\pi_i})\leq 1$ is a valid
  Bernoulli probability. We start with the proof of correctness, which
  is an adaptation of the one given by \cite{dpp-independence}.
For any $i,j$ define
$\u_j^{(i)}=\A_{i}^{\sfrac12}\x_{j}$. The
marginal probability of sampling $\pi_{i+1}$ conditioned on previous steps
is proportional to $\|\u_{\pi_{i+1}}^{(i+1)}\|^2$, which can be written as:
\begin{align*}
  \|\u_{\pi_{i+1}}^{(i+1)}\|^2&=  \x_{\pi_{i+1}}^\top\A_{i+1}\x_{\pi_{i+1}}
  =
  \x_{\pi_{i+1}}^\top\A_{i}^{\sfrac12}\bigg(\I -
    \frac{\A_{i}^{\sfrac12}\x_{\pi_i}\x_{\pi_i}^\top\A_{i}^{\sfrac12}}
    {\x_{\pi_i}^\top\A_{i}\x_{\pi_i}}\bigg)\A_{i}^{\sfrac12}\x_{\pi_{i+1}}\\
  &=\u_{\pi_{i+1}}^{(i)\top}\bigg(\I-\frac{\u_{\pi_i}^{(i)}\u_{\pi_i}^{(i)\top}}
    {\|\u_{\pi_i}^{(i)}\|^2}\bigg)\u_{\pi_{i+1}}^{(i)}
    =\big\|\P_i\u_{\pi_{i+1}}^{(i)}\big\|^2,
    \quad\text{where }\ \P_i=\I-\frac{\u_{\pi_i}^{(i)}\u_{\pi_i}^{(i)\top}}
    {\|\u_{\pi_i}^{(i)}\|^2}
\end{align*}
is a projection onto the $(d-1)$-dimensional
subspace of $\R^d$ orthogonal to $\u_{\pi_i}^{(i)}$. We conclude that
vectors $\u_{j}^{(i)}$ are obtained from $\u_{j}^{(1)}=(\X^\top\X)^{-\sfrac12}\x_j$ by
repeatedly \textit{projecting away} the points that were already
sampled. This means that since $\U=\X (\X^\top\X)^{-\sfrac12}$
satisfies $\U^\top\U=\I$, we have: 
\begin{align*}
  \sum_{j=1}^n\|\u_{j}^{(i+1)}\|^2
  =\tr\bigg(\sum_{j=1}^n\u_j^{(1)}\u_j^{(1)\top}\cdot\prod_{t=1}^{i}\P_t\bigg)
  = \tr\bigg(\U^\top\U\cdot\prod_{t=1}^{i}\P_t\bigg)= d-i.
\end{align*}
We can now write the probability of sampling a sequence
$\pi_1,\dots,\pi_d$ as:
\begin{align*}
  \Pr(\pi) = \prod_{i=1}^d\frac{\|\u_{\pi_i}^{(i)}\|^2}{d-i+1} =
  \frac{\det(\U_\pi)^2}{d!} = \frac{\det(\X_\pi)^2}{d!\det(\X^\top\X)},
\end{align*}
which follows because $\det(\U_\pi)^2$ is the squared volume spanned
by the vectors $\u^{(1)}_{\pi_1},\dots,\u^{(1)}_{\pi_d}$ and it is obtained as a
series of applications of the ``base $\times$ height'' formula. To
bound the runtime, we note that the expected acceptance probability in
the $i$th step of Algorithm \ref{alg:bottom-up} is:
\begin{align*}
  \sum_{j=1}^nq_j\cdot \frac{\x_{j}^\top\A_i\x_j}{2d
  \,q_j}=\frac1{2d}\sum_{j=1}^n\|\u_{j}^{(i)}\|^2 = \frac{d-i+1}{2d}.
\end{align*}
Thus, the expected total number of trials of rejection sampling
throughout the algorithm is:
\begin{align*}
  \sum_{i=1}^d\frac{2d}{d-i+1} = 2d\sum_{i=1}^d\frac1i\leq 2d\big(\ln(d)+1\big).
\end{align*}
Standard tail bounds for a sum of geometric random variables show that
with probability at least $1-\delta$ the number of rejection sampling
trials is $O(d\log d\log\frac1\delta)$. Each trial costs
$O(d^2)$, as does updating the matrix $\A_i$, which concludes the proof.
\end{proofof}

\section{Proof of Theorem \ref{t:mse}}
\label{s:proofs}
In this section we use $\wbh$ and $\wols$ as shorthands for
$\wbh(\y_S)$ and $\wols(\y|\X)$. 
To prove the error bound in Theorem \ref{t:mse} we will invoke Lemma
\ref{l:reduction},
thereby restricting ourselves to a fixed response vector $\y\in\R^n$,
in which case $\w^*=\wols$, and a volume sampled random design
as discussed in the previous section.
The construction in our proof uses leverage scores and inverse scores, as
discussed in Subsection \ref{ss:RED}.
\begin{definition}
Given full rank matrix $\X$, its $i$th \textbf{ leverage score} is
defined as $l_i(\X)\defeq \x_i^\top(\X^\top\X)^{-1}\x_i$,
and its $i$th \textbf{inverse score} as  $v_i(\X) \defeq \x_i^\top(\X^\top\X)^{-2}\x_i$.
\end{definition}
The key challenge in obtaining the result
is that standard techniques developed for
i.i.d.~sampling \citep[see, e.g., ][]{drineas2006sampling} only show the
least squares error bounds with constant probability. 
Such bounds do not suffice to show an expected bound because
  we do not have  control over what happens in the \textit{failure 
    event} (where the expectation may be unbounded). In fact, an
  expected bound of this type is not possible 
  for any i.i.d.~sampling 
  \citep[see Proposition 11 in][]{unbiased-estimates-journal}. Our key contribution is to
  define an event $A$ s.t.:
  \begin{enumerate}
    \item if $A$ occurs, then we can show a strong expected
      bound relying on i.i.d.~sampling techniques,
    \item if $A$ fails to occur, a weaker bound still holds
      because of the jointness of volume sampling.
    \end{enumerate}
    Crucially, the probability of failure will be exponentially small,
    thus allowing us to obtain the desired result. This technique is
    described in the proof  of the following key lemma.
  \begin{lemma}\label{l:key}
  There is $C>0$ s.t.~for any full rank
matrix  $\X\in\R^{n\times d}$, if
  $\pi\sim\Vol_{q(\alpha)}^k(\X)$ where
  \begin{align*}
 q(\alpha) &=
  \alpha \,\big(0.5\cdot p^{\mathrm{uni}}+0.5\cdot p^{\mathrm{inv}}\big) 
  \,+\,(1-\alpha)\,p^{\mathrm{lev}}  
  \quad\text{for}\quad \alpha\in\big[0.5,0.75\big],\\
  \text{with}\quad p_i^{\mathrm{uni}}
     &= 1/n,\quad
 p_i^{\mathrm{inv}}= v_i(\X)/\phi,
\quad	p_i^{\mathrm{lev}} = l_i(\X)/d,\ \text{ and }\ \phi=\tr\big((\X^\top\X)^{-1}\big),
  \end{align*}
then for any $k\geq d+ C\max\{ d\log n , \phi/\epsilon\big\}$ and an arbitrary
vector $\xib\in\R^n$ we have
\begin{align*}
 \E\big[\|(\S_\pi\X)^\dagger\S_\pi\xib\|^2\big] \leq \frac\epsilon 8
  \|\xib\|^2 +4\, \big\|\X^\dagger\E[\S_\pi^\top\S_\pi]\xib\big\|^2.
\end{align*}
\end{lemma}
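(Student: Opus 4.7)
I would apply Lemma \ref{l:augmenting} to write $\pi\sim\Vol_{q(\alpha)}^k(\X)$ (up to a uniform permutation) as a volume-sampled prefix $\pi^V\sim\Vol^d(\X)$ of length $d$ concatenated with an i.i.d.\ suffix $\tilde\pi_1,\dots,\tilde\pi_{k-d}\sim q(\alpha)$, independent of $\pi^V$. The mixture $q(\alpha)$ is crafted so that each of its three constituents plays a distinct role: $q_i\ge\tfrac14 p_i^{\mathrm{lev}}$ bounds the whitened operator norm of each sampled summand (enabling matrix Chernoff), $q_i\ge\tfrac14 p_i^{\mathrm{inv}}$ gives $v_i/q_i\le 4\phi$ (which controls second moments), and $q_i\ge\tfrac{1}{4n}$ is a crude positive floor (useful for tail estimates). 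Writing $\widehat\Sigma:=\X^\top\S_\pi^\top\S_\pi\X$ and $\b:=\X^\top\S_\pi^\top\S_\pi\xib$, the main decomposition approximates $\widehat\Sigma^{-1}\b$ first by the linearization $(\X^\top\X)^{-1}\b$ and then by its mean $(\X^\top\X)^{-1}\E[\b]=\X^\dagger\E[\S_\pi^\top\S_\pi]\xib$. Two successive applications of $\|a+b\|^2\le 2\|a\|^2+2\|b\|^2$ yield
\[
\E\|\widehat\Sigma^{-1}\b\|^2\le 2\,\E\|\widehat\Sigma^{-1}\b-(\X^\top\X)^{-1}\b\|^2+4\,\E\|(\X^\top\X)^{-1}(\b-\E\b)\|^2+4\,\|\X^\dagger\E[\S_\pi^\top\S_\pi]\xib\|^2,
\]
which produces the required factor $4$ in front of the target bias term.

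\textbf{Bulk vs.\ tail control.} Define the good event $A=\{\widehat\Sigma\succeq\tfrac12\X^\top\X\}$. Because each whitened i.i.d.\ summand $(\X^\top\X)^{-1/2}\tfrac{1}{k q_{\tilde\pi_j}}\x_{\tilde\pi_j}\x_{\tilde\pi_j}^\top(\X^\top\X)^{-1/2}$ has operator norm at most $4d/k$ (using $q\ge\tfrac14 p^{\mathrm{lev}}$) with mean $\tfrac1k\I$, matrix Chernoff applied to the $k-d$ i.i.d.\ draws gives $\Pr(A^c)\le n^{-C}$ once $k-d=\Omega(d\log n)$ with a sufficiently large constant. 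On $A$, the identity $\widehat\Sigma^{-1}-(\X^\top\X)^{-1}=-\widehat\Sigma^{-1}(\widehat\Sigma-\X^\top\X)(\X^\top\X)^{-1}$ paired with a second matrix-concentration estimate on $\|\widehat\Sigma-\X^\top\X\|$ controls the linearization error by $O(\phi\|\xib\|^2/k)$. For the variance piece, independence of $\pi^V$ and $\tilde\pi$ from Lemma \ref{l:augmenting} splits $\Var[(\X^\top\X)^{-1}\b]$; each of the $k-d$ i.i.d.\ contributions $\tfrac{\xib_{\tilde\pi_j}}{k q_{\tilde\pi_j}}(\X^\top\X)^{-1}\x_{\tilde\pi_j}$ has expected squared norm $\tfrac{1}{k^2}\sum_i \xib_i^2\, v_i/q_i\le 4\phi\|\xib\|^2/k^2$ (this is where $q\ge\tfrac14 p^{\mathrm{inv}}$ is used), summing to $O(\phi\|\xib\|^2/k)$, while the volume-sampled block contributes similarly after accounting for its extra $d/k$ rescaling coming from $\S_\pi$ using $\sqrt{kq}$ rather than $\sqrt{dq}$. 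Both error terms are at most $\epsilon\|\xib\|^2/48$ for $k\ge C\phi/\epsilon$. Off $A$, the trace inequality $\E[\tr(\widehat\Sigma^{-1})]\le \tfrac{k\phi}{k-d+1}$ from Lemma \ref{l:expectations}, combined via Cauchy--Schwarz with $\Pr(A^c)\le n^{-C}$, makes the tail contribution negligible.

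\textbf{Main obstacle.} The most delicate step is translating the L\"owner inequality $\widehat\Sigma\succeq\tfrac12\X^\top\X$ into a usable pointwise bound on $\|\widehat\Sigma^{-1}\b-(\X^\top\X)^{-1}\b\|$, since $A\preceq B$ in general fails to imply $A^2\preceq B^2$ and squaring is not operator monotone. The natural resolution is to whiten by $(\X^\top\X)^{-1/2}$ and study $\widetilde\Sigma:=(\X^\top\X)^{-1/2}\widehat\Sigma(\X^\top\X)^{-1/2}$ against $\I$: on $A$ one has $\widetilde\Sigma\succeq\tfrac12\I$ and hence $\|\widetilde\Sigma^{-1}-\I\|\le 1$, converting the matrix comparison into a scalar spectral inequality. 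What remains is careful bookkeeping so that the two error pieces contribute with combined coefficient $\epsilon/8$, matching the lemma's statement exactly; this is where the distinctness of the three mixture components of $q(\alpha)$ becomes crucial, as each discharges a different part of the bound with explicit constants.
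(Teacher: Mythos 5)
Your high-level architecture matches the paper's: decompose $\pi$ via Lemma~\ref{l:augmenting} into a size-$d$ volume prefix and an i.i.d.\ suffix, condition on a subspace-embedding event certified by matrix Chernoff through $q\geq\tfrac14 p^{\mathrm{lev}}$, use $q\geq\tfrac14 p^{\mathrm{inv}}$ to control the second moment of $\v_\pi=\X^\dagger\S_\pi^\top\S_\pi\xib$ in the bulk, and use the uniform floor $q\geq\tfrac1{4n}$ in the tail. But two of your steps would not go through as described. First, the variance of $\v_\pi$ cannot be obtained by treating the volume prefix ``similarly'' to the i.i.d.\ suffix. The $d$ prefix indices are dependent, and a term-by-term bound on $\E\big\|\sum_{i\leq d}\tfrac{\xib_{\pi_i}}{kq_{\pi_i}}(\X^\top\X)^{-1}\x_{\pi_i}-\E[\cdot]\big\|^2$ that ignores cross terms costs an extra factor of $d$, yielding $O(d^2\phi/k^2)\|\xib\|^2$, which under the stated $k$ is only $O\big(\tfrac{d\epsilon}{\log n}\big)\|\xib\|^2$ and hence not $O(\epsilon)\|\xib\|^2$ unless $d=O(\log n)$. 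The paper's Lemma~\ref{l:mult} avoids this by computing the full covariance of the multiplicities under rescaled volume sampling, $\cov(s_i,s_j)=\one_{i=j}\E[s_i]-(k-d)q_iq_j-l_{ij}^2$, and using the Schur product theorem to show the off-diagonal (negatively correlated, determinantal) contributions only subtract PSD matrices; this negative correlation has no analogue in your i.i.d.-style calculation and is the crux of the bulk bound.

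Second, your tail argument does not close. Cauchy--Schwarz on $\E[Z\,\one_{\neg A}]$ with $Z=\|(\S_\pi\X)^\dagger\S_\pi\xib\|^2$ requires a bound on $\E[Z^2]$, a fourth moment that the trace inequality \eqref{eq:square-inverse} does not supply. The paper instead (i) defines $A$ \emph{only} on the i.i.d.\ suffix, so that it is independent of the volume prefix, (ii) uses $\S_\pi^\top\S_\pi\preceq 4n\,\I$ to bound $Z$ pointwise by $4n\|\xib\|^2\,\tr\big((\X^\top\S_\pi^\top\I_{[d]}\S_\pi\X)^{-1}\big)$, a quantity depending only on the prefix, and then (iii) applies \eqref{eq:square-inverse} to its conditional expectation, which by independence equals the unconditional one. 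Your event is defined on all of $\X^\top\S_\pi^\top\S_\pi\X$, which destroys the independence that makes step (iii) legitimate. A lesser issue: your additive linearization $\|x\|^2\leq 2\|x-y\|^2+4\|y-z\|^2+4\|z\|^2$ leaks extra mass onto the bias term, since on $A$ the linearization error is only bounded by a constant multiple of $\|\v_\pi\|$, pushing the coefficient of $\|\X^\dagger\E[\S_\pi^\top\S_\pi]\xib\|^2$ above $4$; the paper's multiplicative factorization $\|(\X^\top\S_\pi^\top\S_\pi\X)^{-1}\X^\top\S_\pi^\top\S_\pi\xib\|^2\leq\|(\X^\top\S_\pi^\top\S_\pi\X)^{-1}\X^\top\X\|^2\cdot\|\v_\pi\|^2\leq 4\|\v_\pi\|^2$, followed by $\E\|\v_\pi\|^2=\E\|\v_\pi-\E\v_\pi\|^2+\|\E\v_\pi\|^2$, yields the constant $4$ exactly.
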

\begin{proof}
Observe that we chose $q(\alpha)$ as a
mixture of three distributions in such a way that each of them
has at least 0.25 weight in the mixture. Lemma
    \ref{l:augmenting} allows us to decompose sample $\pi$ into
    the volume part, $\pi_{[d]}=(\pi_1,\dots,\pi_d)\sim \Vol^d(\X)$,
    and the i.i.d.~part, 
    $\tilde{\pi}=(\pi_{d+1},\dots,\pi_k)\sim q $ (technically, this
    requires reordering the sequence $\pi$). We now define an event $A$ as a variant of the so-called
    \textit{subspace  embedding} condition:
    \begin{align*}
      \text{event $A$ holds iff }\quad
\frac1k\sum_{i=d+1}^k\frac1{q_{\pi_i}}\x_{\pi_i}\x_{\pi_i}^\top\
      \succeq\ \frac12\,\X^\top\X.  
    \end{align*}
    Note that $A$ is  defined only over the i.i.d.~samples
    $\tilde{\pi}$ and is therefore completely independent of the volume sample
    $\pi_{[d]}$.
We start by decomposing the expectation
into two terms: 
    \begin{align}
      \E\big[\|(\S_\pi\X)^\dagger\S_\pi\xib\|^2\big] =
      \Pr(A)\,\E\big[\|(\S_\pi\X)^\dagger\S_\pi\xib\|^2\ |\,A\big] + \Pr(\neg
      A)\,\E\big[\|(\S_\pi\X)^\dagger\S_\pi\xib\|^2\ |\,\neg A\big].\label{eq:terms}
    \end{align}
To bound the first term we decompose the squared norm into
two factors 
    \begin{align*}
\|(\S_\pi\X)^\dagger\S_\pi\xib\|^2
      &= \big\|(\X^\top\S_\pi^\top\S_\pi\X)^{-1}
        \X^\top\S_\pi^\top\S_\pi\xib\big\|^2
    \\ & \leq \big\|(\X^\top\S_\pi^\top\S_\pi\X)^{-1}\X^\top\X\big\|^2
        \cdot\big\|(\X^\top\X)^{-1}\X^\top\S_\pi^\top\S_\pi\xib\big\|^2.
    \end{align*}
   When event $A$ occurs, then the first factor can be easily bounded by 4, because
   \begin{align*}
(\X^\top\S_\pi^\top\S_\pi\X)^{-1}\X^\top\X\preceq
     \bigg(\frac1k\sum_{i=d+1}^k\frac1{q_{\pi_i}}\x_{\pi_i}\x_{\pi_i}^\top\bigg)^{\!-1}\X^\top\X\preceq 2\,\I.
   \end{align*}
Thus, it remains to bound the second factor in expectation,
i.e.
$\E\big[\|\X^\dagger\S_\pi^\top\S_\pi\xib\|^2\,|A\big]$. For this,
we 
   need an extension of a result by \cite{leveraged-volume-sampling}
(see proof in Appendix \ref{a:mult}).
   \begin{lemma}\label{l:mult}
  Given $\A\in\R^{n\times d}$,
  $\b\in\R^n$ and $\beta>0$, let $\pi\sim\Vol_q^k(\A)$ where $q_i\geq 
  \beta\max\!\big\{\frac{\|\a_i\|^2}{\|\A\|_F^2},\frac{l_i(\A)}{d}\big\}$
  for all $i\in [n]$ and $k\geq d$. Then
  $\v_\pi=\A^\top\S_\pi^\top\S_\pi\b$ satisfies 
  \begin{align*}
\E\big[\|\v_\pi-\E[\v_\pi]\|^2\big]\leq
    \frac1{\beta^2 k}\|\A\|_F^2\|\b\|^2.
  \end{align*}
\end{lemma}
We use Lemma \ref{l:mult} with 
$\A=\X^{\dagger\top}$ and $\b=\xib$. In this case
$\|\a_i\|^2=v_i(\X)$ and $l_i(\A)=l_i(\X)$. Also let
$\beta=0.25$ and observe that $\phi=\|\X^\dagger\|_F^2$.
Now for $k\ge C\cdot\phi/\epsilon$ and $C\geq 16\cdot4/\beta^2$ we have:
 \begin{align}
   \Pr(A) \,\E\big[\|(\S_\pi\X)^\dagger\S_\pi\xib\|^2\ |\,A\big]
   &\leq 4\ \Pr(A)\ \E\Big[\big\| \X^\dagger\S_\pi^\top\S_\pi\xib\big\|^2\ |\,A\Big] 
\nonumber \\
   &\leq 4\, \E\big[\|\v_\pi\|^2\big]\ = \ 4\,\E\Big[\big\|\v_\pi-\E[\v_\pi]\big\|^2\Big] + 4\,\big\|\E[\v_\pi]\big\|^2\nonumber  \\
   &\leq \frac{4\epsilon}{C\beta^2\phi}\|\X^\dagger\|_F^2\|\xib\|^2 +
     4\,\big\|
     \X^\dagger\E[\S_\pi^\top\S_\pi]\xib\big\|^2\nonumber \\
     &\leq \frac{\epsilon}{16}\|\xib\|^2+4\,\big\|
     \X^\dagger\E[\S_\pi^\top\S_\pi]\xib\big\|^2. \label{eq term 1}
 \end{align}
 Next, we bound the second term in \eqref{eq:terms} by using a
 different decomposition of $\|(\S_\pi\X)^\dagger\S_\pi\xib\|^2$. We will
 use the fact that since $q_i(\alpha)\geq \frac14
 p_i^{\mathrm{uni}}=\frac1{4n}$ for all $i\in[n]$, then
 $\S_\pi^\top\S_\pi\preceq 4n\,\I$ for all $\pi\in[n]^k$. It is only
 here that we use the $p^{\mathrm{uni}}$ term in $q(\alpha)$. It
 follows that
 \begin{align*}
\|(\S_\pi\X)^\dagger\S_\pi\xib\|^2
   &= \big\|(\X^\top\S_\pi^\top\S_\pi\X)^{-1}
     \X^\top\S_\pi^\top\S_\pi\xib\big\|^2\\
   &\leq
     \|\xib\|^2\|\S_\pi^\top\S_\pi\X(\X^\top\S_\pi^\top\S_\pi\X)^{-2}\X^\top\S_\pi^\top\S_\pi\|\\
   &\leq
     \|\xib\|^2\tr\big(\S_\pi^\top\S_\pi\X(\X^\top\S_\pi^\top\S_\pi\X)^{-2}\X^\top\S_\pi^\top\S_\pi\big)\\
   &=
\|\xib\|^2\tr\big(\X^\top(\S_\pi^\top\S_\pi)^2\X(\X^\top\S_\pi^\top\S_\pi\X)^{-2}\big)\\ 
   &\leq \|\xib\|^24n\,\tr\big((\X^\top\S_\pi^\top\S_\pi\X)^{-1}\big)
     \leq 4n \|\xib\|^2\,\tr\big((\X^\top\S_{\pi}^\top\I_{[d]}\S_{\pi}\X)^{-1}\big),
 \end{align*}
 where $\I_{[d]}=\sum_{i=1}^d\e_i\e_i^\top$ selects the
 first $d$ rows from $\S_\pi$.
 Since $\pi_{[d]}\sim \Vol^d(\X)$ and it is independent of the event
 $A$, from inequality \eqref{eq:square-inverse} in Lemma \ref{l:expectations} it
 follows that:
 \begin{align*}
   \E\big[\tr\big((\X^\top\S_{\pi}^\top\I_{[d]}\S_{\pi}\X)^{-1}\big)\
   |\,\neg A\big]
   &=   \frac{k}{d}\cdot\tr\big(
     \E\big[(\X^\top\S_{\pi_{[d]}}^\top\S_{\pi_{[d]}}\X)^{-1}\big]\big)
\leq \frac kd\cdot d\,\tr\big((\X^\top\X)^{-1}\big) = k\phi.
 \end{align*}
 Here the $k/d$ term arises from the different multipliers used for $\S_\pi$ and
 $\S_{\pi_{[d]}}$.
 Thus, we obtain that $\E\big[\|(\S_\pi\X)^\dagger\S_\pi\xib\|^2\,|\,\neg A\big]\leq
 4nk\phi\, \|\xib\|^2$. It remains to show that $\Pr(\neg A)$ is sufficiently
 small to obtain the desired bound. For this, we refer to a standard
matrix concentration result for obtaining subspace embeddings, which
follows from \cite{matrix-tail-bounds}.
\begin{lemma}
Given a full rank $\X\in\R^{n\times d}$, if distribution $q$ is such that
$q_i\geq \beta\,\frac{l_i(\X)}{d}$ for all $i\in[n]$, then an i.i.d.~sample
$\pi_1,\dots,\pi_s\sim q$ for $s\geq C'\frac d\beta\log
\frac d\delta$ with probability at least $1-\delta$ satisfies
\begin{align*}
\frac12\,\X^\top\X\preceq
  \frac1s\sum_{i=1}^s\frac1{q_{\pi_i}}\x_{\pi_i}\x_{\pi_i}^\top\preceq
  \frac32\,\X^\top\X. 
\end{align*}
\end{lemma}
Setting $\beta=\frac14$, $\delta = \frac1{6nk^2}$, $s=k-d$ and $C\geq
16C'$, we conclude that
since $k-d\geq 16C' d\log n\geq C'\frac d\beta\log \frac d\delta$, we have $\Pr(\neg A)
\E\big[\|(\S_\pi\X)^\dagger\S_\pi\xib\|^2\,|\,\neg A\big] \leq \frac \phi k \|\xib\|^2 \leq \frac\epsilon
{16} \|\xib\|^2$. Combining this with \eqref{eq:terms} and \eqref{eq term 1}, 
we complete the proof of the bound of Lemma \ref{l:key}.
\end{proof}
The last term in the bound of Lemma \ref{l:key} can be
controlled when the vector $\xib$ is orthogonal to the columns of
$\X$. The following bound is shown in Appendix \ref{a:mult}.
\begin{lemma}\label{l:bias}
  For $\X$ and $\xib$ s.t.~$\X^\top\xib=\zero$, if
  $\pi\sim\Vol_{q(\alpha)}^k(\X)$ (as in Lemma \ref{l:key}) with
  $\alpha=0.5$, then 
  \begin{align*}
    \big\|\X^\dagger\E[\S_\pi^\top\S_\pi]\xib\big\|^2\leq
    \frac\epsilon 8\|\xib\|^2.
  \end{align*}
\end{lemma}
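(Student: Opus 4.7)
The plan is to compute $\E[\S_\pi^\top\S_\pi]$ explicitly using the decomposition from Lemma~\ref{l:augmenting}, and then exploit the orthogonality $\X^\top\xib=\zero$ to kill its identity component. First I split $\pi$ into a volume part $\pi_{[d]}\sim\Vol^d(\X)$ and an i.i.d.\ part $\tilde\pi_1,\dots,\tilde\pi_{k-d}\simiid q$. Since $\S_\pi^\top\S_\pi$ is invariant to reordering, writing it as $\sum_j \tfrac{m_j}{kq_j}\e_j\e_j^\top$ where $m_j$ is the multiplicity of $j$ in $\pi$, and using $\E[m_j]=\Pr(j\in\pi_{[d]})+(k-d)q_j=l_j(\X)+(k-d)q_j$ (the classical single-element marginal of size-$d$ volume sampling equals the leverage score), I obtain
\begin{align*}
\E[\S_\pi^\top\S_\pi] \;=\; \tfrac{k-d}{k}\,\I \;+\; \tfrac{1}{k}\sum_{j=1}^n \tfrac{l_j(\X)}{q_j}\,\e_j\e_j^\top.
\end{align*}

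Because $\X^\top\xib=\zero$ implies $\X^\dagger\xib=(\X^\top\X)^{-1}\X^\top\xib=\zero$, the $\I$-piece vanishes upon left-multiplication by $\X^\dagger$, leaving $\X^\dagger\E[\S_\pi^\top\S_\pi]\xib = \tfrac{1}{k}\X^\dagger D\xib$ with $D=\diag(l_j(\X)/q_j)$. Applying $\X^\dagger\xib=\zero$ a second time lets me freely replace $D$ by $D-c\,\I$ for any scalar $c$; I center by $c=d$, motivated by the observation that a purely leverage-score proposal $q_j=l_j(\X)/d$ would make the centered diagonal vanish identically. The leverage-score ingredient $q_j\geq 0.5\,p_j^{\mathrm{lev}}=0.5\,l_j(\X)/d$ of $q(0.5)$ gives $l_j/q_j\in[0,2d]$, so the entries of $D-d\,\I$ lie in $[-d,d]$.

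The remaining step is to bound $\|\X^\dagger(D-d\,\I)\xib\|^2$. I plan to apply the Frobenius-style estimate
\begin{align*}
\|\X^\dagger(D-d\I)\xib\|^2 \;\leq\; \|\xib\|^2 \sum_{j=1}^n \tfrac{(l_j - d q_j)^2}{q_j^2}\, v_j(\X),
\end{align*}
using $\|\X^\dagger\M\|_F^2 = \sum_j \M_{jj}^2 v_j(\X)$ for diagonal $\M$, and control the sum through the second ingredient of $q(0.5)$, namely the inverse-score mass $q_j\geq 0.25\,p_j^{\mathrm{inv}}=0.25\,v_j(\X)/\phi$, which yields $v_j/q_j\leq 4\phi$. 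Paired with $l_j/q_j\leq 2d$, the identities $\sum_j l_j=d$ and $\sum_j v_j=\phi$, and the explicit expansion $l_j-dq_j = 0.5\,l_j - 0.25\,d/n - 0.25\,dv_j/\phi$, each summand of the Frobenius bound can be routed through the ingredient of $q(0.5)$ that dominates it, producing a bound of the form $\|\X^\dagger\E[\S_\pi^\top\S_\pi]\xib\|^2 \leq C\,\phi/k\cdot\|\xib\|^2 \leq \tfrac{\epsilon}{8}\,\|\xib\|^2$ under $k\geq C'\phi/\epsilon$, together with the auxiliary $k\geq C' d\log n$.

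The hard part is precisely this final accounting: a crude use of $|l_j/q_j-d|\leq d$ alone leads only to $d^2\phi/k^2\cdot\|\xib\|^2$, which requires $k\gtrsim d\sqrt{\phi/\epsilon}$ rather than $\phi/\epsilon$. Avoiding the spurious $d^2$ factor is what forces $\alpha=0.5$ specifically, so that $q(0.5)$ retains non-trivial mass on both $p^{\mathrm{lev}}$ (enabling the centering by $d\,\I$) and $p^{\mathrm{inv}}$ (enabling the $v_j/q_j\leq 4\phi$ bound that cancels the naive $\sum_j v_j=\phi$ factor). Without either ingredient the bound would fall short of $\epsilon/8\,\|\xib\|^2$ in the prescribed sample-size regime, so the detailed bookkeeping of the mixture weights of $q(0.5)$ is where the argument most subtly rests.
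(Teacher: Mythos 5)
Your computation of $\E[\S_\pi^\top\S_\pi]$ and the reduction to bounding $\frac{1}{k^2}\|\X^\dagger(\D-d\I)\xib\|^2$ with $\D=\diag(l_j(\X)/q_j)$ match the paper's starting point, but the final accounting step --- which you correctly flag as the crux --- does not go through. The quantity $\sum_j (l_j/q_j-d)^2 v_j(\X)$ is independent of $k$, so your Frobenius bound can only yield $\frac{1}{k^2}\sum_j(l_j/q_j-d)^2v_j(\X)\cdot\|\xib\|^2$, and no routing through the mixture components improves this below order $d^2\phi/k^2$: each of the three pieces in your expansion of $l_j-dq_j$ contributes $\Theta(d^2\phi)$ on its own (e.g.\ $\sum_j (0.25\,d\,v_j/\phi)^2 v_j/q_j^2 \le d^2\sum_j v_j = d^2\phi$, with this order achievable), and there are matrices with no cancellation --- take the inverse-score mass concentrated on indices whose leverage is negligible, so that $l_j/q_j-d\approx -d$ on a set carrying $\Omega(\phi)$ of the $\sum_j v_j$ mass. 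A bound of $d^2\phi/k^2$ gives $\epsilon/8$ only when $k\gtrsim d\sqrt{\phi/\epsilon}$, which is \emph{not} implied by $k\ge C\max\{d\log n,\phi/\epsilon\}$ in the regime $\epsilon>\phi/d^2$ (equivalently $k\ll d^2$); this is precisely part of the regime the theorem is meant to cover.

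The paper closes this gap with a bootstrapping trick rather than a direct spectral estimate. Writing $\E[\S_\pi^\top\S_\pi]=\D_\alpha=\diag\big(q_i(\tfrac{k-d}{k}\alpha)/q_i(\alpha)\big)$, it sets $\beta=\alpha\tfrac{k}{k-d}$, draws a fresh $\pi'\sim\Vol^k_{q(\beta)}(\X)$, and uses unbiasedness \eqref{eq:unbiased} together with $\X^\dagger\xib=\zero$ to show $\X^\dagger\D_\alpha\xib=-\tfrac{k-d}{k}\,\E\big[(\S_{\pi'}\X)^\dagger\S_{\pi'}\D_\beta^{-1}\xib\big]$. Jensen's inequality then reduces the problem to a second application of Lemma \ref{l:key} (valid since $\beta\in[0.5,0.75]$ when $\alpha=0.5$ and $k\ge 3d$), in which the troublesome bias term vanishes \emph{exactly} because $\E[\S_{\pi'}^\top\S_{\pi'}]\D_\beta^{-1}\xib=\xib$ and $\X^\dagger\xib=\zero$; only the $\tfrac{\epsilon}{8}\|\D_\beta^{-1}\xib\|^2\le\tfrac{\epsilon}{8}\big(\tfrac{k}{k-d}\big)^2\|\xib\|^2$ term survives, and the prefactors cancel. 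That self-referential use of Lemma \ref{l:key} at a shifted mixture parameter is the missing idea; without it, the diagonal bias matrix is simply too large in Frobenius norm for your argument to certify the claimed rate.
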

We put the two lemmas together to complete the proof of our main result.
\begin{proofof}{Theorem}{\ref{t:mse}}
Setting $\wbh=(\S_\pi\X)^\dagger\S_\pi\y$ for a fixed vector
$\y\in\R^n$ with $\pi$ as in Lemma
\ref{l:key} and $\alpha=0.5$, using Lemmas \ref{l:key} and
\ref{l:bias} with $\xib=\y-\X\wols$ we obtain that:
\begin{align*}
  \E\big[\|\wbh-\wols\|^2 \big] \overset{(*)}{=}
  \E\big[\|(\S_\pi\X)^\dagger\S_\pi (\y- \X\wols)\|^2\big]
  \leq \frac\epsilon 8\|\xib\|^2 + 4\cdot\frac\epsilon 8\|\xib\|^2\leq
  \epsilon\cdot\|\xib\|^2,
\end{align*}
where $(*)$ follows because volume sampling ensures that
$\rank(\S_\pi\X)=d$, so $(\S_\pi\X)^\dagger\S_\pi\X=\I$.
\end{proofof}

In Appendix \ref{a:mspe} we prove Theorem \ref{t:mspe} as a reduction from Theorem
\ref{t:mse} by transforming the matrix $\X$ into matrix
$\U=\X(\X^\top\X)^{-\sfrac12}$. This transformation makes MSE equal to
MSPE, while preserving most key properties of least squares estimators.
The random experimental design obtained via this reduction is different
than the one used to bound the mean squared error. While the leverage
scores are preserved during the
transformation from $\X$ to $\U$, the inverse scores change, and in
fact they become equal to the leverage scores, i.e.,
$v_i(\U)=l_i(\U)$, so distribution $q$ is somewhat simpler in this
case. However, it can be shown that the exact experimental design
used for Theorem \ref{t:mse} also satisfies the guarantee from Theorem
\ref{t:mspe}, albeit with slightly different constants.

The logarithmic dependence on $n$ in the sample size $k$ for Theorems
\ref{t:mse} and \ref{t:mspe} comes from our analysis of the expected
error in the tail of the sampling distribution. It is possible that
the dependence on $n$ can be eliminated altogether, even when using
the same distribution. We leave this as an open question for future work.

\acks{MWM would like
  to acknowledge ARO, DARPA, NSF and ONR for providing partial
  support of this work. Also, MWM and MD thank the NSF for
  funding via the NSF TRIPODS program. Part of this work
  was done 
  while MD, KLC and MWM were visiting the Simons Institute for the
  Theory of Computing and while MKW was at UC Santa Cruz, supported by
  NSF grant IIS-1619271.}  

\bibliography{pap}

\begin{thebibliography}{28}
\providecommand{\natexlab}[1]{#1}
\providecommand{\url}[1]{\texttt{#1}}
\expandafter\ifx\csname urlstyle\endcsname\relax
  \providecommand{\doi}[1]{doi: #1}\else
  \providecommand{\doi}{doi: \begingroup \urlstyle{rm}\Url}\fi

\bibitem[Allen-Zhu et~al.(2017)Allen-Zhu, Li, Singh, and
  Wang]{near-optimal-design}
Zeyuan Allen-Zhu, Yuanzhi Li, Aarti Singh, and Yining Wang.
\newblock Near-optimal design of experiments via regret minimization.
\newblock In \emph{Proceedings of the 34th International Conference on Machine
  Learning}, volume~70 of \emph{Proceedings of Machine Learning Research},
  pages 126--135, Sydney, Australia, August 2017.
\newblock URL \url{http://proceedings.mlr.press/v70/allen-zhu17e.html}.

\bibitem[Avron and Boutsidis(2013)]{avron-boutsidis13}
Haim Avron and Christos Boutsidis.
\newblock Faster subset selection for matrices and applications.
\newblock \emph{SIAM Journal on Matrix Analysis and Applications}, 34\penalty0
  (4):\penalty0 1464--1499, 2013.

\bibitem[Boutsidis et~al.(2013)Boutsidis, Drineas, and
  Magdon{-}Ismail]{coresets-regression}
Christos Boutsidis, Petros Drineas, and Malik Magdon{-}Ismail.
\newblock Near-optimal coresets for least-squares regression.
\newblock \emph{{IEEE} Trans. Information Theory}, 59\penalty0 (10):\penalty0
  6880--6892, 2013.

\bibitem[Chaloner and Verdinelli(1995)]{bayesian-design-review}
Kathryn Chaloner and Isabella Verdinelli.
\newblock Bayesian experimental design: A review.
\newblock \emph{Statist. Sci.}, 10\penalty0 (3):\penalty0 273--304, 08 1995.
\newblock \doi{10.1214/ss/1177009939}.
\newblock URL \url{https://doi.org/10.1214/ss/1177009939}.

\bibitem[Chen and Price(2017)]{chen2017condition}
Xue Chen and Eric Price.
\newblock Active regression via linear-sample sparsification.
\newblock \emph{CoRR}, abs/1711.10051, 2017.

\bibitem[Derezi\'{n}ski(2018)]{dpp-intermediate}
Micha{\l} Derezi\'{n}ski.
\newblock Fast determinantal point processes via distortion-free intermediate
  sampling.
\newblock \emph{CoRR}, abs/1811.03717, 2018.
\newblock URL \url{http://arxiv.org/abs/1811.03717}.

\bibitem[Derezi\'{n}ski and Warmuth(2017)]{unbiased-estimates}
Micha{\l} Derezi\'{n}ski and Manfred~K. Warmuth.
\newblock Unbiased estimates for linear regression via volume sampling.
\newblock In \emph{Advances in Neural Information Processing Systems 30}, pages
  3087--3096, Long Beach, CA, USA, December 2017.

\bibitem[Derezi\'{n}ski and Warmuth(2018)]{regularized-volume-sampling}
Micha{\l} Derezi\'{n}ski and Manfred~K. Warmuth.
\newblock Subsampling for ridge regression via regularized volume sampling.
\newblock In Amos Storkey and Fernando Perez-Cruz, editors, \emph{Proceedings
  of the Twenty-First International Conference on Artificial Intelligence and
  Statistics}, pages 716--725, Playa Blanca, Lanzarote, Canary Islands, April
  2018.

\bibitem[Derezi{\'n}ski and Warmuth(2018)]{unbiased-estimates-journal}
Micha{\l} Derezi{\'n}ski and Manfred~K. Warmuth.
\newblock Reverse iterative volume sampling for linear regression.
\newblock \emph{Journal of Machine Learning Research}, 19\penalty0
  (23):\penalty0 1--39, 2018.
\newblock URL \url{http://jmlr.org/papers/v19/17-781.html}.

\bibitem[Derezi{\'n}ski et~al.(2018)Derezi{\'n}ski, Warmuth, and
  Hsu]{correcting-bias}
Micha{\l} Derezi{\'n}ski, Manfred~K. Warmuth, and Daniel Hsu.
\newblock Correcting the bias in least squares regression with volume-rescaled
  sampling.
\newblock \emph{CoRR}, abs/1810.02453, 2018.
\newblock URL \url{http://arxiv.org/abs/1810.02453}.

\bibitem[Derezi\'{n}ski et~al.(2018)Derezi\'{n}ski, Warmuth, and
  Hsu]{leveraged-volume-sampling}
Micha{\l} Derezi\'{n}ski, Manfred~K. Warmuth, and Daniel Hsu.
\newblock Leveraged volume sampling for linear regression.
\newblock In S.~Bengio, H.~Wallach, H.~Larochelle, K.~Grauman, N.~Cesa-Bianchi,
  and R.~Garnett, editors, \emph{Advances in Neural Information Processing
  Systems 31}, pages 2510--2519. Curran Associates, Inc., 2018.
\newblock URL
  \url{http://papers.nips.cc/paper/7517-leveraged-volume-sampling-for-linear-regression.pdf}.

\bibitem[Drineas and Mahoney(2016)]{DM16_CACM}
Petros Drineas and Michael~W. Mahoney.
\newblock {RandNLA}: Randomized numerical linear algebra.
\newblock \emph{Communications of the ACM}, 59:\penalty0 80--90, 2016.

\bibitem[Drineas and Mahoney(2017)]{RandNLA_PCMIchapter_TR}
Petros Drineas and Michael~W. Mahoney.
\newblock Lectures on randomized numerical linear algebra.
\newblock Technical report, 2017.
\newblock Preprint: arXiv:1712.08880; To appear in: \emph{Lectures of the 2016
  PCMI Summer School on Mathematics of Data}.

\bibitem[Drineas et~al.(2006)Drineas, Mahoney, and
  Muthukrishnan]{drineas2006sampling}
Petros Drineas, Michael~W Mahoney, and S.~Muthukrishnan.
\newblock Sampling algorithms for $\ell_2$ regression and applications.
\newblock In \emph{Proceedings of the seventeenth annual ACM-SIAM symposium on
  Discrete algorithm}, pages 1127--1136. Society for Industrial and Applied
  Mathematics, 2006.

\bibitem[Drineas et~al.(2012)Drineas, Magdon-Ismail, Mahoney, and
  Woodruff]{DMMW12_JMLR}
Petros Drineas, Malik Magdon-Ismail, Michael~W. Mahoney, and David~P. Woodruff.
\newblock Fast approximation of matrix coherence and statistical leverage.
\newblock \emph{Journal of Machine Learning Research}, 13:\penalty0 3475--3506,
  2012.

\bibitem[Fedorov(1972)]{optimal-design-book}
Valerii~V. Fedorov.
\newblock \emph{Theory of optimal experiments}.
\newblock Probability and mathematical statistics. Academic Press, New York,
  NY, USA, 1972.

\bibitem[Gray and Davisson(2010)]{gray-davisson-stats-book}
Robert~M. Gray and Lee~D. Davisson.
\newblock \emph{An Introduction to Statistical Signal Processing}.
\newblock Cambridge University Press, New York, NY, USA, 1st edition, 2010.
\newblock ISBN 0521131820, 9780521131827.

\bibitem[Hough et~al.(2006)Hough, Krishnapur, Peres, Vir{\'a}g,
  et~al.]{dpp-independence}
J.~Ben Hough, Manjunath Krishnapur, Yuval Peres, B{\'a}lint Vir{\'a}g, et~al.
\newblock Determinantal processes and independence.
\newblock \emph{Probability surveys}, 3:\penalty0 206--229, 2006.

\bibitem[Li et~al.(2016)Li, Jegelka, and Sra]{dpp-coreset}
Chengtao Li, Stefanie Jegelka, and Suvrit Sra.
\newblock Efficient sampling for k-determinantal point processes.
\newblock In Arthur Gretton and Christian~C. Robert, editors, \emph{Proceedings
  of the 19th International Conference on Artificial Intelligence and
  Statistics}, volume~51 of \emph{Proceedings of Machine Learning Research},
  pages 1328--1337, Cadiz, Spain, 09--11 May 2016. PMLR.
\newblock URL \url{http://proceedings.mlr.press/v51/li16f.html}.

\bibitem[Ma et~al.(2014)Ma, Mahoney, and Yu]{ping-ma2014}
Ping Ma, Michael Mahoney, and Bin Yu.
\newblock A statistical perspective on algorithmic leveraging.
\newblock In Eric~P. Xing and Tony Jebara, editors, \emph{Proceedings of the
  31st International Conference on Machine Learning}, volume~32 of
  \emph{Proceedings of Machine Learning Research}, pages 91--99, Bejing, China,
  22--24 Jun 2014. PMLR.
\newblock URL \url{http://proceedings.mlr.press/v32/ma14.html}.

\bibitem[Nikolov et~al.(2018)Nikolov, Singh, and
  Tantipongpipat]{proportional-volume-sampling}
Aleksandar Nikolov, Mohit Singh, and Uthaipon~T. Tantipongpipat.
\newblock Proportional volume sampling and approximation algorithms for
  {A}-optimal design.
\newblock \emph{CoRR}, abs/1802.08318, 2018.
\newblock URL \url{http://arxiv.org/abs/1802.08318}.

\bibitem[Ou and Zhou(2009)]{robust-design}
Beiyan Ou and Julie Zhou.
\newblock Minimax robust designs for field experiments.
\newblock \emph{Metrika}, 69\penalty0 (1):\penalty0 45--54, Jan 2009.

\bibitem[Pukelsheim(2006)]{optimal-design-pukelsheim}
Friedrich Pukelsheim.
\newblock \emph{Optimal Design of Experiments (Classics in Applied Mathematics)
  (Classics in Applied Mathematics, 50)}.
\newblock Society for Industrial and Applied Mathematics, Philadelphia, PA,
  USA, 2006.
\newblock ISBN 0898716047.

\bibitem[Raskutti and Mahoney(2015)]{garvesh2015}
Garvesh Raskutti and Michael Mahoney.
\newblock Statistical and algorithmic perspectives on randomized sketching for
  ordinary least-squares.
\newblock In Francis Bach and David Blei, editors, \emph{Proceedings of the
  32nd International Conference on Machine Learning}, volume~37 of
  \emph{Proceedings of Machine Learning Research}, pages 617--625, Lille,
  France, 07--09 Jul 2015. PMLR.
\newblock URL \url{http://proceedings.mlr.press/v37/raskutti15.html}.

\bibitem[Tropp(2012)]{matrix-tail-bounds}
Joel~A. Tropp.
\newblock User-friendly tail bounds for sums of random matrices.
\newblock \emph{Foundations of Computational Mathematics}, 12\penalty0
  (4):\penalty0 389--434, August 2012.

\bibitem[Wang et~al.(2017)Wang, Yu, and Singh]{tractable-experimental-design}
Yining Wang, Adams~W. Yu, and Aarti Singh.
\newblock On computationally tractable selection of experiments in
  measurement-constrained regression models.
\newblock \emph{J. Mach. Learn. Res.}, 18\penalty0 (1):\penalty0 5238--5278,
  January 2017.
\newblock ISSN 1532-4435.
\newblock URL \url{http://dl.acm.org/citation.cfm?id=3122009.3208024}.

\bibitem[Wiens and Li(2014)]{v-optimal}
Douglas~P. Wiens and Pengfei Li.
\newblock V-optimal designs for heteroscedastic regression.
\newblock \emph{Journal of Statistical Planning and Inference}, 145:\penalty0
  125 -- 138, 2014.
\newblock ISSN 0378-3758.
\newblock \doi{https://doi.org/10.1016/j.jspi.2013.09.007}.
\newblock URL
  \url{http://www.sciencedirect.com/science/article/pii/S0378375813002310}.

\bibitem[Woodruff(2014)]{woodruff2014sketching}
David~P. Woodruff.
\newblock Sketching as a tool for numerical linear algebra.
\newblock \emph{Foundations and Trends{\textregistered} in Theoretical Computer
  Science}, 10\penalty0 (1--2):\penalty0 1--157, 2014.

\end{thebibliography}

\appendix
\newpage

\section{Basic properties of the minimax-optimal experimental design}
\label{a:minimax}
We start by formally showing that
the least squares estimator $\wols(\y|\X)=\X^\dagger\y$ is 
the minimum variance unbiased estimator (MVUE) for
$\w^*=\X^\dagger\E[\y]$ w.r.t.~the family $\Fc$ consisting of all
random response vectors $\y$ with finite second moment.
\begin{proofof}{Proposition}{\ref{p:mvue}}
  Since all fixed vectors $\y\in\R^n$ belong to $\Fc$, it follows
  that $\E\big[\wbh(\y)\,|\,\y\big] = \wols(\y|\X)$ and using
  shorthands $\wbh=\wbh(\y)$, $\wols=\wols(\y|\X)$ and
  $\w^*=\X^\dagger\E[\y]$ we have
  \begin{align*}
    \Var\big[\wbh\big]
    &= \E\big[\wbh\wbh^\top\big] - \w^*\w^{*\top}\\
    &=\E_\y \Big[\E_{\wbh}\big[\wbh\wbh^\top-\wols\wols^\top\,|\,\y\big]
      +\wols\wols^\top\Big] - \w^*\w^{*\top}\\
    &=\E\big[(\wbh-\wols)(\wbh-\wols)^\top\big] +
      \E\big[\wols\wols^\top\big] - \w^*\w^{*\top}\succeq \Var\big[\wols\big],
  \end{align*}
  because the first term is a positive semi-definite matrix.
\end{proofof}
In the next lemma, we observe that it suffices to consider fixed
vectors $\y\in\R^n$ to bound $R_k^*(\X)$.
\begin{lemma}\label{l:fixed}
  Given a matrix $\X\in\R^{n\times d}$, suppose that a random design
  $(S,\wbh)$ for all fixed response vectors $\y\in\R^n$ satisfies 
  \begin{align*}
    \E\big[\wbh(\y_S)\big]=\wols\quad\text{and}\quad
    \E\big[\|\wbh(\y_S)-\wols\|^2\big]\leq
    \epsilon\cdot\|\X\wols-\y\|^2,
  \end{align*}
where $\wols=\wols(\y|\X)$. Then, for all random response vectors
$\y\in \Fc$, we have
  \begin{align*}
    \E\big[\wbh(\y_S)\big] = \w^*\quad\text{and}\quad
    \E\big[\|\wbh(\y_S)-\w^*\|^2\big]\leq \E\big[\|\wols-\w^*\|^2\big]+ \epsilon\cdot \E\big[\|\X\w^*-\y\|^2\big].
  \end{align*}
\end{lemma}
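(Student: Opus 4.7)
The plan is to lift the fixed-response guarantees to the random-response setting by conditioning on $\y$ and applying the tower property. For the unbiasedness claim, I would write $\E[\wbh(\y_S)] = \E_\y\big[\E[\wbh(\y_S)\mid\y]\big] = \E_\y[\wols(\y|\X)] = \E_\y[\X^\dagger \y] = \X^\dagger\E[\y] = \w^*$, using the fixed-$\y$ unbiasedness in the inner expectation.

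For the mean squared error bound, the main step is a bias-variance style decomposition. Adding and subtracting $\wols=\wols(\y|\X)$ inside the norm gives
\begin{align*}
\|\wbh(\y_S) - \w^*\|^2 = \|\wbh(\y_S) - \wols\|^2 + 2(\wbh(\y_S) - \wols)^\top(\wols - \w^*) + \|\wols - \w^*\|^2.
\end{align*}
Taking the expectation conditioned on $\y$, the cross term vanishes because $\E[\wbh(\y_S)\mid\y] = \wols$ and $\wols-\w^*$ is a function of $\y$. Then I would apply the hypothesized bound $\E[\|\wbh(\y_S)-\wols\|^2\mid\y] \leq \epsilon\,\|\X\wols - \y\|^2$ and take the outer expectation over $\y$.

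The final step is to replace $\E[\|\X\wols-\y\|^2]$ by $\E[\|\X\w^*-\y\|^2]$. Since $\wols(\y|\X)=\X^\dagger\y = \argmin_\w\|\X\w-\y\|^2$ for every fixed $\y$, we have pointwise $\|\X\wols - \y\|^2 \leq \|\X\w^* - \y\|^2$, hence the same inequality holds in expectation. Combining everything yields
\begin{align*}
\E\big[\|\wbh(\y_S) - \w^*\|^2\big] \leq \E\big[\|\wols - \w^*\|^2\big] + \epsilon\cdot \E\big[\|\X\w^*-\y\|^2\big],
\end{align*}
which is the claim.

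I do not expect any substantial obstacle here; the proof is essentially a routine conditioning argument. The only subtlety worth being careful about is that $\w^*=\X^\dagger\E[\y]$ depends on the distribution of $\y$ (not on the realization), so $\wols-\w^*$ is measurable with respect to $\y$ and the cross-term computation is clean, while the inequality $\|\X\wols-\y\|^2\leq\|\X\w^*-\y\|^2$ follows from the defining least-squares optimality of $\wols$ for each realization of $\y$.
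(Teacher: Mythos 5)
Your proposal is correct and follows essentially the same route as the paper's proof: the tower property for unbiasedness, the orthogonal decomposition of $\|\wbh(\y_S)-\w^*\|^2$ about $\wols$ with the cross term killed by conditional unbiasedness, and the pointwise optimality $\|\X\wols-\y\|^2\leq\|\X\w^*-\y\|^2$ to pass to $\E\big[\|\X\w^*-\y\|^2\big]$. You spell out the vanishing of the cross term slightly more explicitly than the paper does, but the argument is the same.
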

\begin{proof}
We first decompose the mean squared error using the unbiasedness of
$\wbh(\y_S)$ as follows:  
\begin{align*}
  \E\big[\|\wbh(\y_S)-\w^*\|^2\big]
  &=\E\big[\|\wbh(\y_S)-\wols\|^2\big] +
    \E\big[\|\wols-\w^*\|^2\big]\\
  &\leq \epsilon\cdot \E\big[\|\X\wols-\y\|^2\big] + \E\big[\|\wols-\w^*\|^2\big].
\end{align*}
Also note that $\E[\wbh(\y_S)] = \E\big[\E[\wbh(\y_S)\,|\,\y]\big]=\E[\wols]=\w^*$.
It remains to bound $\E\big[\|\X\wols-\y\|^2\big]$.
Note that $\wols = \argmin_\w \norm{\X\w - \y}^2$,
so that in particular $\norm{\X\wols - \y}^2 \le \norm{\X\w^* - \y}^2$,
and therefore $\E\big[\|\X\wols-\y\|^2\big] \le \E\big[\|\X\w^*-\y\|^2\big]$,
which concludes the proof.
\end{proof}
\begin{proofof}{Lemma}{\ref{l:reduction}}
Follows immediately from Lemma \ref{l:fixed} and the fact that $\Vc_k(\X)\subseteq\Wc_k(\X)$.
\end{proofof}
We next prove Proposition \ref{p:minimax-bounded}, showing that
the minimax-optimal value $R_k^*(\X)$ given in Definition
\ref{d:minimax} is well-defined for all $d\leq k\leq n$ and
for most $k$ it has matching upper and lower bounds of $\Theta(\phi/\epsilon)$.
\begin{proofof}{Proposition}{\ref{p:minimax-bounded}} 
  
\textit{Part 1.} \ Since $\mathrm{MSE}[\wbh]=\tr\big(\Var[\wbh]\big)$
for any unbiased estimator, Proposition
\ref{p:mvue} immediately implies that $R_k^*(\X)\geq 0$. Next,
let $\pi\sim\Vol_q^k(\X)$ with $q$ chosen as in the proof of Theorem
\ref{t:mse} and consider the volume sampled estimator defined as in
\eqref{eq:design}, i.e. $\wbh=(\S_\pi\X)^\dagger\S_\pi\y$. First, we
can use Lemma \ref{l:reduction} to assume w.l.o.g.~that $\y$ is a
fixed vector in $\R^n$ so that $\w^*=\X^\dagger\y$. Then, as in
the proof of Theorem \ref{t:mse} we use equation \eqref{eq:square-inverse} in Lemma
\ref{l:expectations}:
\begin{align*}
  \E\big[\|\wbh-\w^*\|^2\big]&\leq
  4n\|\X\w^*-\y\|^2\,\E\big[\tr\big((\X^\top\S_\pi^\top\S_\pi\X)^{-1}\big)\big]\\
  &\leq 4n\|\X\w^*-\y\|^2\frac{k}{k-d+1}\tr\big((\X^\top\X)^{-1}\big).
\end{align*}
This implies that $R_k^*(\X)\leq 4n\frac{k}{k-d+1}\,\phi<\infty$, 
where unlike in the proof of Theorem \ref{t:mse} we did not need to
assume any lower bound on $k$ other than that $k\geq d$.

\textit{Part 2.} \
Theorem \ref{t:mse} implies that there is a constant $C>0$ such that
for any $\X$, $\epsilon>0$ and $k\geq C\cdot (d\log n+\phi/\epsilon)$ there
is a random experimental design $(S,\wbh)$ of size at most $k$ which
demonstrates an upper bound on the minimax-optimal value, i.e., that
$R_k^*(\X)\leq \epsilon$. Now, suppose that $k\geq 2C\cdot d\log n$ and
let $\epsilon=2C\cdot\phi/k$. Then,
$
  k\geq 2C\cdot \max\{d\log n,\,\phi/\epsilon\}\geq C\cdot(d\log n + \phi/\epsilon),
$
which means that for any $\X$ we have $R_k^*(\X)\leq \epsilon=2C\cdot\phi/k$.

\textit{Part 3.} \ This result is based on the following lower bound for
classical $A$-optimal design.
\begin{theorem}[\citeauthor{avron-boutsidis13},
  \citeyear{avron-boutsidis13}, Theorem 4.5]\label{t:lower}
  For any $\alpha>0$, $n$, $d$ such that $n>2d$ and
  $\mathrm{mod}(n,d)=0$ there is a full rank $n\times d$ matrix $\X$
  such that for any subset $S\subseteq[n]$
  with $\mathrm{rank}(\X_S)=d$ and $|S|=k$, we have
  \begin{align*}
    \tr\big((\X_S^\top\X_S)^{-1}\big)\geq
    \bigg(\frac{n-k}{k+\alpha^2}+1-\frac kd\bigg)\cdot\tr\big((\X^\top\X)^{-1}\big).
  \end{align*}
\end{theorem}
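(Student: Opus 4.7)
The plan is to exhibit an explicit ``bad'' matrix $\X=\X(\alpha,n,d)$ and lower-bound $\tr((\X_S^\top\X_S)^{-1})$ for every rank-$d$ subset $S$ of size $k$, then compare to $\tr((\X^\top\X)^{-1})$. Set $m=n/d$, which is a positive integer by the divisibility hypothesis (and $m\ge 3$ by $n>2d$). I would take $\X\in\R^{n\times d}$ to be the vertical concatenation of $m-1$ copies of the identity $\I_d$ stacked on top of a single scaled copy $\alpha\I_d$. A direct calculation then yields $\X^\top\X=(m-1+\alpha^2)\I_d$, so that $\X$ is full rank and $\tr((\X^\top\X)^{-1})=d/(m-1+\alpha^2)$.

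Next, I would parameterize any rank-$d$ subset $S$ of size $k$ by integer counters $c_j\in\{0,\dots,m-1\}$ (the number of copies of $\e_j$ drawn from the top block) and indicators $b_j\in\{0,1\}$ (whether $\alpha\e_j$ from the bottom block is selected), subject to the budget $\sum_j(c_j+b_j)=k$ and the rank condition $c_j+b_j\ge 1$ for every $j$. Since $\X_S^\top\X_S=\diag(c_j+b_j\alpha^2)_j$ is diagonal, its inverse trace is $\sum_{j=1}^{d}(c_j+b_j\alpha^2)^{-1}$, and the harmonic--arithmetic mean inequality gives
\[
\tr\big((\X_S^\top\X_S)^{-1}\big)\;\ge\;\frac{d^2}{\sum_j (c_j+b_j\alpha^2)}\;=\;\frac{d^2}{k+(\alpha^2-1)B},\qquad B\defeq\sum_{j=1}^d b_j\in\{0,\dots,d\}.
\]
The denominator is then maximized over $B\in\{0,\dots,d\}$, producing a universal lower bound on $\tr((\X_S^\top\X_S)^{-1})$ that is valid for every rank-$d$ subset $S$ of size $k$ simultaneously.

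The final step is to divide by $\tr((\X^\top\X)^{-1})$ and verify
\[
\frac{\min_S \tr\big((\X_S^\top\X_S)^{-1}\big)}{\tr\big((\X^\top\X)^{-1}\big)}\;\ge\;\frac{n-k}{k+\alpha^2}+1-\frac{k}{d}.
\]
The main obstacle is precisely this algebraic matching: the construction above cleanly yields a ratio of the form $(n-d+d\alpha^2)/[k+\max(0,\alpha^2-1)\,d]$, which after checking the cases $\alpha^2\le 1$ and $\alpha^2>1$ separately agrees with the target only up to an additive slack that must be absorbed into the $-k/d$ term. To close this gap I would first sharpen the harmonic--arithmetic step by tracking the coordinate that actually carries the $\alpha$-scaled row (reducing the optimization to a one-dimensional problem in a single $c_{j^*}$), and if that still fails to produce the exact denominator $k+\alpha^2$, I would refine the construction by replacing the full $\alpha\I_d$ block with $d$ copies of a single $\alpha$-scaled vector pointing in one fixed direction, so that the $\alpha^2$ enters the denominator as an additive perturbation of the budget $k$ itself rather than of $k-d$. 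Since both $n>2d$ and the divisibility condition are used only through the factorization $n=md$, these structural variants stay within the stated hypotheses, and I expect one of them to give the exact functional form claimed by the theorem.
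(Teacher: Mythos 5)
You should first note that the paper does not actually prove this statement: it is imported verbatim as Theorem 4.5 of Avron and Boutsidis (2013), so there is no internal proof to match against, and you are reconstructing a cited result from scratch. Your reduction itself (diagonal Gram matrix, AM--HM step, optimizing over $B$) is sound, but the construction does not satisfy the stated inequality for all $\alpha>0$, and the gap you flag at the end is not a matter of bookkeeping that can be ``absorbed into the $-k/d$ term.'' The structural flaw is that in your matrix the $\alpha$-scaled rows by themselves already span $\R^d$, so a subset can evade the intended $\frac{n-k}{k+\alpha^2}$ penalty by selecting only scaled rows. Concretely, take $d=4$, $n=400$ (so $m=100$), $k=d=4$, $\alpha^2=2$, and let $S$ be the four rows of the block $\alpha\I_d$. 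Then $\X_S^\top\X_S=\alpha^2\I_d$ and $\X^\top\X=(m-1+\alpha^2)\I_d$, so the ratio equals $(m-1+\alpha^2)/\alpha^2=101/2=50.5$, whereas the theorem requires at least $\frac{n-k}{k+\alpha^2}+1-\frac kd=\frac{396}{6}=66$. More generally your own formula $\frac{n-d+d\alpha^2}{k+(\alpha^2-1)d}$ falls below the target whenever $\alpha^2$ exceeds roughly $1+\frac1{d-1}$ and $n$ is large relative to $k$, so no amount of sharpening the harmonic--arithmetic step can rescue this witness; the proposed alternative (collapsing the scaled block to $d$ copies of one direction) is not analyzed and would change the achievable subsets in ways you have not controlled. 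The genuine content of the Avron--Boutsidis construction is precisely to arrange the perturbation so that \emph{every} rank-$d$ subset of size $k$ is forced to pay $\frac{n-k}{k+\alpha^2}$, and that idea is missing here.

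That said, your argument is not worthless: for $\alpha^2\le 1$ the worst case is $B=0$, your bound becomes $\frac{n-d+d\alpha^2}{k}$, and a short monotonicity check (the difference from the target is increasing in $\alpha^2$ and equals $\frac kd-\frac dk\ge 0$ at $\alpha^2=0$) shows it dominates the claimed right-hand side. Since the only place the paper invokes Theorem~\ref{t:lower} is Part~3 of Proposition~\ref{p:minimax-bounded}, where $\alpha$ is ultimately taken small, your construction would in fact suffice for the paper's application --- but it does not prove the theorem as stated, which quantifies over all $\alpha>0$. If you want to keep this route, you should either restrict the claim to $\alpha^2\le 1$ and verify the comparison explicitly, or adopt the actual construction from Avron and Boutsidis.
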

We remark that $\X$ in the theorem depends on $\alpha$.
Note that the condition $\mathrm{mod}(n,d)=0$ can be eliminated by
padding matrix $\X$ with appropriate number of $\zero$ rows and
replacing $n$ in the bound with $n-d$. Let $\X$ be the
matrix from Theorem \ref{t:lower} (padded if necessary, with $\alpha$
chosen later) and let
  $\Fc_N(\X)$ be the family of random response vectors $\y=\X\w^*+\xib$
  such that $\xib\sim\Nc(\zero,\I)$ and $\w^*\in\R^d$. 
Also let $\Fc_B(\X)$ be the Bayesian counterpart, where $\y=\X\w+\xib$
for independent Gaussian random vectors
$\w\sim\Nc(\zero,\sigma_\w^2\I)$ and $\xib\sim\Nc(\zero,\I)$, with any
$\sigma_\w^2>0$. Given any $\y\in\Fc_B(\X)$ with prior variance
$\sigma_\w^2$, the following bound is known for any (possibly biased)
estimator $\wbh(\y)$ of $\w$, called the minimum mean squared error
bound \citep[MMSE; see][]{gray-davisson-stats-book}: 
\begin{align*}
  \E\big[\|\wbh(\y)-\w\|^2\big] \geq \tr\big((\X^\top\X+(1/\sigma_\w^2)\I)^{-1}\big).
\end{align*}
An analogous lower bound holds when applied to the same regression model
restricted to a fixed subset $S_\circ\subseteq[n]$,
i.e.~such that $\y_{S_\circ}=\X_{S_\circ}\w+\xib_{S_\circ}$, and any estimator
$\wbh(\y_{S_\circ})$: 
\begin{align}
  \E\big[\|\wbh(\y_{S_\circ})-\w\|^2\big]\geq \tr\big((\X_{S_\circ}^\top\X_{S_\circ}+(1/\sigma_\w^2)\I)^{-1}\big).\label{eq:bayes}
\end{align}
We use this to give a lower bound for the MSE of any random experimental
design $(S,\wbh)\in\Wc_k(\X)$ following Definition \ref{d:rand} for
the (non-Bayesian) response model $\Fc_N(\X)$:
\begin{align*}
  \max_{\y\in\Fc_N(\X)}\MSE{\wbh(\y_S)}
  &= \max_{\underset{\y=\X\w+\xib}{\y\in\Fc_B(\X)}}
\max_{\w^*\in\R^d}\E\big[\|\wbh(\y_S)-\w\|^2\,|\,\w\!=\!\w^*\big]\\[-2mm]
    &\overset{(a)}{\geq} \max_{\underset{\y=\X\w+\xib}{\y\in\Fc_B(\X)}}\E_\w\Big[\E\big[\|\wbh(\y_S)-\w\|^2\,|\,\w\big]\Big]\\
    &= \max_{\underset{\y=\X\w+\xib}{\y\in\Fc_B(\X)}} \E\big[\|\wbh(\y_S)-\w\|^2\big]\\
  &=\max_{\underset{\y=\X\w+\xib}{\y\in\Fc_B(\X)}}\E_S\Big[\E\big[\|\wbh(\y_S)-\w\|^2\,|\,S\big]\Big]\\[-2mm]
  &\overset{(b)}{\geq} \lim_{\sigma_\w\rightarrow\infty}
    \E_S\Big[\tr\big((\X_S^\top\X_S+(1/\sigma_\w^2)\I)^{-1}\big)\Big]\\
  &
 \geq \min_{\underset{\rank(\X_S)=d}{S:|S|\leq k}}\tr\big((\X_S^\top\X_S)^{-1}\big),
\end{align*}
where $(a)$ follows because expectation $\E_\w$ is always upper
bounded by the maximum over all possible values of $\w$, and $(b)$
follows from \eqref{eq:bayes} applied to the conditional expectation
for a fixed $S$ and the fact that the trace is monotonically increasing with $\sigma_\w$.
We now lower bound $R_k^*(\X)$ by the minimax value based on the
response family $\Fc_N(\X)$: 
  \begin{align*}
    R_k^*(\X)
    &\geq
\min_{(S,\wbh)\in\Wc_k(\X)}\
      \max_{\y\in\Fc_N(\X)}\frac{\MSE{\wbh(\y_S)}-
      \MSE{\wols(\y|\X)}}{\E_\y\big[\|\xib_{\y|\X}\|^2\big]}\\
   &\geq\min_{\underset{\rank(\X_S)=d}{S:|S|\leq k}}\frac1n\Big(\tr\big((\X_S^\top\X_S)^{-1}\big)-\tr\big((\X^\top\X)^{-1}\big)\Big)\\ 
    \text{(Theorem \ref{t:lower})}\quad&\geq
\frac1n\bigg(\frac{n-d-k}{k+\alpha^2}-\frac kd\bigg)\cdot
                                         \tr\big((\X^\top\X)^{-1}\big)\\
&    \geq  \bigg(\frac k{k+\alpha^2} - \frac{3k^2}{nd}\bigg)\cdot
      \frac\phi k\ \geq\  (1-\epsilon)\cdot\frac\phi k,
  \end{align*}
because $k^2<\epsilon nd/3$ and we can choose $\alpha$ small enough
so that $\frac{\alpha^2}{k+\alpha^2}+\frac{3k^2}{nd} \leq \epsilon$.
\end{proofof}

\section{Omitted proofs from Section \ref{s:proofs}}
\label{a:mult}

\begin{proofof}{Lemma}{\ref{l:mult}}
Recall that $\v_\pi = \A^\top\S_\pi^\top\S_\pi\b$.  We rewrite the expectation as follows:
\begin{align*}
  \E\big[\|\v_\pi-\E[\v_\pi]\|^2\big]
  &=\b^\top\E\Big[
    \big(\S_{\pi}^\top\S_{\pi}\!-\!\E[\S_{\pi}^\top\S_{\pi}]\big)\A\A^\top\!
    \big(\S_{\pi}^\top\S_{\pi}\!-\! \E[\S_{\pi}^\top\S_{\pi}]\big)\Big]\ \b\\
  &\le \bigg\|\underbrace{
    \begin{bmatrix}\cov\big[\frac{s_i}{kq_i},\frac{s_j}{kq_j}\big]\a_i^\top\a_j\end{bmatrix}_{n\times n}}_{\M}\bigg\|\cdot\|\b\|^2,
\end{align*}
where $s_i=|\{t\,:\,\pi_t=i\}|$.
Note that $\M$ is the Hadamard product of two PSD matrices, and therefore also PSD
by the Schur product theorem.
Next, we use two formulas
shown by \cite{leveraged-volume-sampling} for rescaled volume sampling:
\begin{align}
  \E[s_i] &=(k-d)\,q_i+l_i,\label{eq:marginal}\\
  \cov(s_i,s_j)&=\one_{i=j}\E[s_i] - (k-d)q_iq_j - l_{ij}^2,\nonumber
\end{align}
where $l_i=\a_i^\top(\A^\top\A)^{-1}\a_i$ is the $i$th leverage score
of $\A$ and $l_{ij}=\a_i^\top(\A^\top\A)^{-1}\a_j$ is the $(i,j)$th 
cross-leverage score. Using the above we get:
\begin{align*}
  \M = \text{diag}\bigg(\begin{bmatrix}\frac{\|\a_i\|^2}{kq_i}\frac{(k-d)q_i+l_i}{kq_i}\end{bmatrix}_{n\times 1}\bigg) -
  \frac{k-d}{k^2}\A\A^\top
      - \begin{bmatrix}\frac{l_{ij}^2}{k^2q_iq_j}\a_i^\top\a_j\end{bmatrix}_{n\times n}.
\end{align*}
Note that the second term is a PSD matrix being subtracted from
$\M$.
Similarly, the last term is also
subtracting a PSD matrix. To see this, note that the matrix
formed by the cross-leverage scores is $[l_{ij}]_{n\times n}=\A(\A^\top\A)^{-1}\A^\top$,
so it is PSD.
Therefore, we can write the last term as a Hadamard product of
PSD matrices and apply Schur product theorem. Thus it remains to bound
the diagonal term: 
\begin{align*}
  \frac{\|\a_i\|^2}{kq_i}\,\frac{(k-d)q_i+l_i}{kq_i}\leq
  \frac{\|\A\|_F^2}{\beta k} \, \bigg(\frac{k-d}{k} +
  \frac{d}{k\beta}\bigg)\leq  \frac{\|\A\|_F^2}{\beta^2 k}.
\end{align*}
Since the spectral norm of the diagonal term is bounded by $\frac{\|\A\|_F^2}{\beta^2 k}$,
and subtracting the two PSD terms leaves $\M$ a PSD matrix, we have
$\norm{\M} \le \frac{\|\A\|_F^2}{\beta^2 k}$, and the result follows.
\end{proofof}

\begin{proofof}{Lemma}{\ref{l:bias}}
Using the marginal expectation formula \eqref{eq:marginal} for
volume sampling we have:
\begin{align*}
  \E[\S_\pi^\top\S_\pi] =
  \text{diag}\bigg(\begin{bmatrix}\frac{(k-d)q_i+l_i(\X)}{kq_i}
  \end{bmatrix}_{n\times
  1}\bigg)
  =   \text{diag}\bigg(\begin{bmatrix}\frac{q_i(\frac
  {k-d}k\alpha)}{q_i(\alpha)}\end{bmatrix}_{n\times 1}\bigg)\defeq \D_\alpha,
\end{align*}
where 
  $q(\alpha)= \alpha
  \big(0.5\cdot p^{\mathrm{uni}}+0.5\cdot p^{\mathrm{inv}}\big)
  +(1\!-\!\alpha)p^{\mathrm{lev}}$. Now, let $\beta =
  \alpha\cdot\frac{k}{k-d}$ and $\pi'\sim \Vol_{q(\beta)}^k(\X)$. 
Using expectation formula \eqref{eq:unbiased} of Lemma
\ref{l:expectations} and the fact that $\X^\dagger\xib=\zero$, we have:
\begin{align*}
  \E\big[(\S_{\pi'}\X)^\dagger\S_{\pi'}\,\D_\beta^{-1}\xib]
  &= \X^\dagger \D_{\beta}^{-1}\xib \\
&=\X^\dagger(\D_{\beta}^{-1}-\I)\xib\\
  &=\X^\dagger \text{diag}\bigg(\frac{q_i(\beta) -
    q_i(\alpha)}{q_i(\alpha)}\bigg)\xib\\
  &=-\frac{k}{k-d}\X^\dagger(\D_{\alpha}-\I)\xib\\
  &=-\frac{k}{k-d}\X^\dagger\D_\alpha\xib.
\end{align*}
We can assume $k\geq 3d$, adjusting $C$ of Lemma~\ref{l:key}.
With $\alpha=0.5$, this implies $\beta\in[0.5,0.75]$, so
applying Lemma \ref{l:key} to sequence $\pi'$ and vector
$\D_\beta^{-1}\xib$ combined with Jensen's inequality we obtain:
\begin{align*}
  \|\X^\dagger\D_\alpha\xib\|^2
  &=\Big(\frac{k-d}{k}\Big)^2\cdot
\big\|\E[(\S_{\pi'}\X)^\dagger\S_{\pi'}\D_\beta^{-1}\xib]\big\|^2\\ 
  \text{(Jensen's inequality)}\quad
  &\leq
    \Big(\frac{k-d}{k}\Big)^2\cdot\E\big[\|(\S_{\pi'}\X)^\dagger\S_{\pi'}\D_\beta^{-1}\xib\|^2\big]\\
  \text{(Lemma \ref{l:key})}\quad
  &\leq \Big(\frac{k-d}{k}\Big)^2\cdot \Big(\frac\epsilon8\,\|\D_\beta^{-1}\xib\|^2 +
    4\underbrace{\|\X^\dagger\D_\beta\D_\beta^{-1}\xib\|^2}_{0}\Big)\\[-3mm]
  &\leq \frac\epsilon8\cdot\|\xib\|^2,
\end{align*}
because $\|\D_\beta^{-1}\|\leq \frac{k}{k-d}$, which concludes the proof.
\end{proofof}

\section{Proof of Theorem \ref{t:mspe}}
\label{a:mspe}
  The key idea in the proof is a standard transformation of the
  data matrix $\X$ which has the property that it preserves the
  predictions of the least squares estimator, while transforming the
  actual estimator in such a way that the mean squared error becomes
  equal to the mean squared prediction error. Specifically, consider
  matrix $\U=\X(\X^\top\X)^{-\sfrac12}$. This matrix has the property
  that $\U^\top\U=\I$ so $\tr((\U^\top\U)^{-1})=d$ and $\U\U^\dagger=\U\U^\top=\X\X^\dagger$. Replacing all least squares
  estimators for this new matrix we have $\v^* = \U^\top\E[\y]$,
  $\v_{\mathrm{LS}}=\U^\top \y$ and $\vbh=(\S_\pi\U)^\dagger\S_\pi
  \y$. Note that we have $\U\v^*=\X\X^\dagger\E[\y]=\X\w^*$ and
  similarly $\U\v_{\mathrm{LS}}=\X\wols$. A simple calculation also
  reveals that $\U\vbh=\X\wbh$ for
  $\wbh=(\S_\pi\X)^\dagger\S_\pi\y$.
   Suppose that $\pi\sim\Vol_q^k(\U)$
  is produced as in
  the proof of Theorem \ref{t:mse} when applied 
  to matrix $\U$. Then: 
  \begin{align*}
    \E\big[\|\X(\wbh-\w^*)\|^2\big]
    &= \E\big[\|\U(\vbh-\v^*)\|^2\big]\\
&= \E\big[\|\vbh-\v^*\|^2\big]\\
    \text{(Theorem \ref{t:mse})}\quad
    &\leq \E\big[\|\v_{\mathrm{LS}}-\v^*\|^2\big] + \epsilon\cdot \E\big[\|\U\v^*-\y\|^2\big]\\
      &= \E\big[\|\X(\wols-\w^*)\|^2\big] + \epsilon\cdot \E\big[\|\X\w^*-\y\|^2\big].
  \end{align*}

\end{document}